\documentclass[11pt]{article}
\usepackage[margin=1in]{geometry}
\usepackage{times}
\usepackage[round,authoryear]{natbib}
\usepackage{mathtools,amsfonts,amssymb,bm}
\mathtoolsset{showonlyrefs}
\usepackage{graphicx}
\usepackage{booktabs}
\usepackage{multirow}
\usepackage{makecell}
\usepackage{adjustbox}
\usepackage{nicefrac}
\usepackage{xspace}
\usepackage{pifont}
\usepackage{amsthm}
\usepackage{algorithm}
\usepackage{algpseudocode}

\newcommand{\cmark}{\checkmark}

\newtheorem{theorem}{Theorem}[section]

\newtheorem{proposition}[theorem]{Proposition}

\theoremstyle{definition}

\def\eqref#1{equation~\refeq{#1}}

\def\1{\bm{1}}

\DeclareMathAlphabet{\mathsfit}{\encodingdefault}{\sfdefault}{m}{sl}
\SetMathAlphabet{\mathsfit}{bold}{\encodingdefault}{\sfdefault}{bx}{n}

\usepackage{marvosym}
\usepackage[table]{xcolor}
\usepackage{array}
\usepackage[hyperfootnotes=false]{hyperref} 
\hypersetup{
   colorlinks=true,
   linkcolor={[rgb]{0.10,0.24,0.48}},
   citecolor={[rgb]{0.10,0.24,0.48}},
   urlcolor={[rgb]{0.10,0.24,0.48}},
   pdfborder={0 0 0}
}

\renewenvironment{abstract}{%
   \begin{center}
      \bfseries\abstractname
   \end{center}%
   \begin{center}
      \begin{minipage}{0.90\textwidth}
         \normalsize
         \setlength{\parindent}{0pt}
         \setlength{\parskip}{0.25em}
}{%
      \end{minipage}
   \end{center}
}

\definecolor{headerblue}{RGB}{220,235,248}
\definecolor{oursblue}{RGB}{240,247,252}

\title{Beyond Token-Local Imitation: Reward-Compatible Temporal Credit Assignment for On-Policy Distillation}
\author{
  Shiqi Liu$^{1,2}$, Zeyu He$^{1,2}$, Letian Tao$^{1,2}$, Guojian Zhan$^{1,2}$, Jiaxin Gao$^{1}$, Feihong Zhang$^{1}$\\
  Jingliang Duan$^{1,2}$, Wei Xiong$^{2}$, Kehua Sheng$^{2}$, Bo Zhang$^{2}$, Yang Guan$^{1}$\textsuperscript{\Letter}, Shengbo Eben Li$^{1}$\textsuperscript{\Letter}\\[0.5ex]
  \small $^{1}$School of Vehicle and Mobility \& College of AI, Tsinghua University\\
  \small $^{2}$Didi Voyager Labs, DiDi Autonomous Driving
}
\date{}
\begin{document}

\maketitle

\begingroup
\renewcommand{\thefootnote}{}
\footnotetext{\textsuperscript{\Letter}\ Corresponding authors: S.~E.~Li and Y.~Guan; email: \href{mailto:lishbo@tsinghua.edu.cn}{lishbo@tsinghua.edu.cn}.}
\endgroup

\begin{abstract}
   On-policy distillation (OPD) has emerged as an effective approach for large language model post-training, yet existing objectives face a trade-off between objective fidelity and optimization stability.
   Token-level OPD provides stable but local supervision, whereas sequence-level OPD captures future credit at the cost of horizon-dependent variance.
   We establish a unified temporal-credit view of these formulations, showing that practical token-level OPD can be interpreted as a temporal approximation to the sequence-level reverse-KL gradient.
   Building on this connection, we propose $\gamma$OPD, which uses discounted temporal credit assignment to balance long-horizon supervision and optimization stability, while admitting a horizon-independent variance bound.
   We further develop a reward-compatible bounded mixing (RBM) mechanism for $\gamma\mathrm{OPD}$ that balances verifiable outcome feedback with the discounted OPD advantage to move beyond purely teacher-dependent optimization.
   Experiments on mathematical and code reasoning demonstrate consistent improvements over existing OPD methods across vanilla, size-mismatched, and multi-teacher distillation settings.
   The algorithm implementation is available at
   \href{https://github.com/liu-s-q19/GammaOPD}{\mbox{the GammaOPD repository}}.

\end{abstract}

\begin{figure}[htbp]
   \centering
   \vspace{-2mm}
   \includegraphics[width=0.82\linewidth]{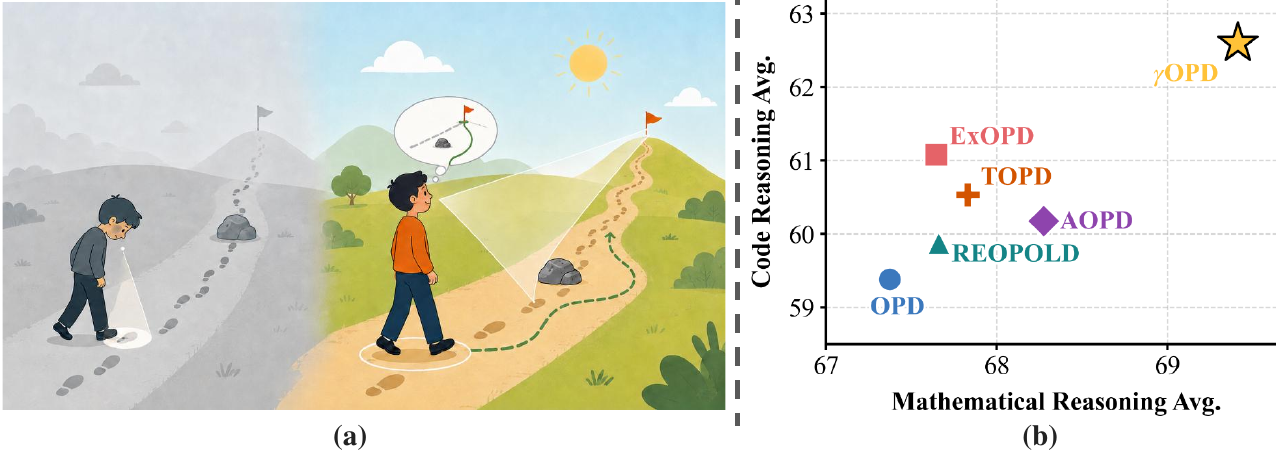}
   \vspace{-2mm}
   \caption{ \textbf{Core Idea.} (a) Vanilla OPD (left) focuses only on the immediate next step and closely follows the teacher's local guidance. We argue that OPD should instead account for the influence of future reasoning steps and should not merely imitate the teacher token by token (right).
      (b) By incorporating future outcomes through temporal credit assignment and balancing teacher supervision with verifiable rewards, $\gamma\mathrm{OPD}$ achieves strong overall performance across mathematical and code reasoning tasks in the multi-teacher distillation setting, as detailed in Section~\ref{sec:experiments}.
   }

   \vspace{-2mm}\label{fig:intro}
\end{figure}

\section{Introduction}

Recent large language models (LLMs), including Kimi K3~\citep{team2026kimi}, GLM-5~\citep{glm5team2026glm5vibecodingagentic}, DeepSeek-V4~\citep{xu2026deepseekv4}, and Nemotron-Cascade 2~\citep{yang2026nemotron}, have demonstrated strong reasoning capabilities across mathematics, coding, and science. To further consolidate and integrate such capabilities, on-policy distillation (OPD)~\citep{lu2025onpolicy} has emerged as a key approach in LLM post-training. It provides dense teacher supervision on trajectories sampled from the current student policy, promoting high-quality reasoning while mitigating exposure bias.

To avoid costly full-vocabulary reverse-KL computation, existing OPD methods \citep{yang2026learning,jin2026entropy,oh2026kl} reformulate the objective in a policy-gradient form and approximate it using a one-sample token-level Monte Carlo estimator~\citep{li2026rethinking}. While computationally efficient, this practical estimator introduces non-negligible bias relative to the original sequence-level objective, effectively altering the optimization problem~\citep{yang2026learning}.
Other methods~\citep{fu2026revisiting} instead adopt sequence-level Monte Carlo estimators to more faithfully preserve the original objective. However, accumulating future credit over the remaining response leads to increasing variance as the sequence grows, resulting in unstable optimization for long reasoning trajectories.

In this work, we present a unified analysis of token-level and sequence-level OPD, deriving their underlying objectives and clarifying the relationship between their policy-gradient formulations. Building on this insight, we propose $\gamma\mathrm{OPD}$, which introduces discounted temporal credit assignment to interpolate between token-local and sequence-level supervision while admitting a horizon-independent variance bound. To complement teacher supervision with outcome-level guidance, we further develop a reward-compatible bounded mixing mechanism that incorporates verifiable outcome feedback without sacrificing the stability benefits of temporal discounting.

Overall, our main contributions are summarized as follows:
\begin{itemize}
   \item We establish a unified temporal-credit view of OPD, showing that practical token-level OPD can be interpreted as a temporally truncated approximation to the sequence-level reverse-KL gradient.

   \item Building on this connection, we propose $\gamma\mathrm{OPD}$, a discounted temporal-credit surrogate that interpolates between token-level and sequence-level credit assignment, while admitting a horizon-independent variance bound.

   \item We further develop Reward-Compatible Bounded Mixing (RBM) for $\gamma\mathrm{OPD}$, which combines discounted teacher-derived credit with verifiable outcome rewards while preventing the teacher signal from dominating task-level supervision.
\end{itemize}

\section{Preliminaries}
\label{sec:preliminaries}

\subsection{Notation}
\label{sec:notation}

We consider autoregressive reasoning tasks over a discrete vocabulary $\mathcal{V}$, where we aim to optimize a student policy $\pi_\theta$ parameterized by $\theta$ under the guidance of a fixed teacher policy $\pi^*$.
Let $\bm{x} \sim \mathcal{D}$ denote a prompt sequence sampled from the reasoning dataset. Given $\bm{x}$, the policy generates a complete response trajectory $\bm{y} = (y_1, y_2, \dots, y_T) \in \mathcal{V}^T$, where $T = |\bm{y}|$ is the number of tokens in the trajectory, and $y_t \in \mathcal{V}$ is the token generated at step $t$.
At decoding step \(t\), the policy conditions on the historical context prefix
\(\bm{h}_t := (\bm{x}, \bm{y}_{<t})\), which consists of the prompt and all previously generated reasoning tokens. We use \(\mathbb{D}_{\mathrm{KL}}(\cdot\|\cdot)\) to denote the KL divergence. For notational simplicity, we define the token-level and sequence-level log-ratios as
\begin{subequations}
   \label{eq:opd-log-ratio}
   \begin{align}
      \Delta_t
       & \triangleq
      \log \pi_\theta(y_t \mid \bm{h}_t)
      -
      \log \pi^*(y_t \mid \bm{h}_t),
      \label{eq:opd-token-log-ratio}
      \\
      \Delta(\bm{y})
       & \triangleq
      \log \pi_\theta(\bm{y} \mid \bm{x})
      -
      \log \pi^*(\bm{y} \mid \bm{x})
      =
      \sum_{t=1}^{T} \Delta_t.
      \label{eq:opd-sequence-log-ratio}
   \end{align}
\end{subequations}

\subsection{On-Policy Distillation (OPD)}

On-policy distillation (OPD) \citep{agarwal2024onpolicy, xu2026deepseekv4}
trains the student by minimizing the reverse KL divergence from the
student policy \(\pi_\theta\) to the teacher policy
\(\pi^*\), evaluated on trajectories induced by the current student
itself:
\begin{equation}
   J_{\mathrm{OPD}}(\theta)
   =
   \mathbb{E}_{\bm{x} \sim \mathcal{D}}
   \left[
      \mathbb{D}_{\mathrm{KL}}\left(\pi_\theta(\bm{y} \mid \bm{x}) \,\|\, \pi^*(\bm{y} \mid \bm{x})\right)
      \right].
   \label{eq:opd-reverse-kl}
\end{equation}
By training on trajectories sampled from the current student policy, on-policy learning reduces exposure bias and is well suited to long chain-of-thought reasoning tasks such as mathematical problem solving \citep{glm5team2026glm5vibecodingagentic}.
Following recent practice \citep{lu2025onpolicy, li2026rethinking}, the empirical implementation of OPD typically converts the reverse-KL minimization in~\eqref{eq:opd-reverse-kl} into an RL-style surrogate gradient:
\begin{equation}
   \begin{aligned}
      \nabla_\theta J_{\mathrm{OPD}}(\theta)
       & \approx -
      \mathbb{E}_{\bm{x} \sim \mathcal{D}}
      \left[
         \sum_{t=1}^{T}
         A_t^{\mathrm{OPD}}
         \nabla_\theta \log \pi_\theta(y_t \mid \bm{h}_t)
         \right],
   \end{aligned}
   \label{eq:opd-policy-gradient}
\end{equation}
where \(A_t^{\mathrm{OPD}}\triangleq -\Delta_t
=
\log \pi^*(y_t \mid \bm{h}_t)
-
\log \pi_\theta(y_t \mid \bm{h}_t)\) is the token-level OPD advantage.

\subsection{RL with Verifiable Rewards (RLVR)}

Reinforcement learning with verifiable rewards (RLVR) \citep{shao2024deepseekmath,guo2025deepseekr1,yu2025dapo} has also become a key approach for improving the reasoning ability of LLMs. For a prompt \(\bm{x} \sim \mathcal{D}\), the policy \(\pi_\theta\) generates an output sequence
\(\bm{y}=(y_1,\ldots,y_T) \sim \pi_\theta(\cdot \mid \bm{x})\). The generated output is evaluated by an external verifier, such as a code compiler or a mathematical rule checker, which provides a sparse sequence-level reward scalar
\(R(\bm{x},\bm{y})\in\{-1,1\}\). The RLVR objective is
\begin{equation}
   J_{\mathrm{RLVR}}(\theta)
   =
   \mathbb{E}_{\bm{x} \sim \mathcal{D},\,
         \bm{y} \sim \pi_\theta(\cdot \mid \bm{x})}
   \left[
      R(\bm{x},\bm{y})
      \right].
   \label{eq:rlvr-objective}
\end{equation}

Compared with OPD, RLVR provides a sparse but verifier-grounded sequence-level correctness signal, whereas OPD supplies dense token-level supervision through teacher guidance, which may nevertheless be limited by the teacher's suboptimal behavior.

\section{Methodology}
\label{sec:methodology}

\subsection{Token-level OPD as a Temporal Approximation}
\label{sec:practical-opd-approximation}

Although the practical OPD objective in~\eqref{eq:opd-policy-gradient} is widely used, it should be understood as a temporal approximation rather than the exact policy gradient of the OPD objective in~\eqref{eq:opd-reverse-kl}. To clarify this distinction, we formally define token-level OPD and sequence-level OPD as follows:
\begin{subequations}
   \label{eq:seq-local-opd-objectives}
   \begin{align}
      J^{\mathrm{seq}}_{\mathrm{OPD}}(\theta)
       & \triangleq
      \mathbb{E}_{\bm{x} \sim \mathcal{D}}
      \left[
         \mathbb{D}_{\mathrm{KL}}
         \left(
         \pi_\theta(\bm{y} \mid \bm{x})
         \,\middle\|\,
         \pi^*(\bm{y} \mid \bm{x})
         \right)
         \right],
      \label{eq:seq-opd-objective}
      \\
      J^{\mathrm{token}}_{\mathrm{OPD}}(\theta)
       & \triangleq
      \mathbb{E}_{\bm{x} \sim \mathcal{D}}
      \left[
         \sum_{t=1}^{T}
         \mathbb{E}_{\bm{h}_t \sim d_{\bar{\theta}}(\cdot \mid \bm{x})}
         \left[
            \mathbb{D}_{\mathrm{KL}}
            \left(
            \pi_\theta(\bm{y}_t \mid \bm{h}_t)
            \,\middle\|\,
            \pi^*(\bm{y}_t \mid \bm{h}_t)
            \right)
            \right]
         \right],
      \label{eq:local-opd-objective}
   \end{align}
\end{subequations}
where \(J^{\mathrm{seq}}_{\mathrm{OPD}}\) denotes the sequence-level reverse-KL objective, i.e., the original OPD objective defined in \eqref{eq:opd-reverse-kl}, which compares the student and teacher distributions over complete response trajectories.
In contrast, \(J^{\mathrm{token}}_{\mathrm{OPD}}\) denotes the token-level objective, which compares their next-token distributions at sampled prefixes. The distribution \(d_{\bar{\theta}}(\bm{h}_t \mid \bm{x})\) is the prefix distribution induced by the current student policy, where \(\bm{h}_t=(\bm{x},\bm{y}_{<t})\). The notation \(\bar{\theta}\) indicates stop-gradient, namely this prefix distribution is treated as fixed when differentiating with respect to \(\theta\).

The two objectives are equivalent at the level of function values when the stop-gradient reference parameter is evaluated at the current policy parameter. Specifically, if \(\bar{\theta}=\theta\), then the sampled prefix distribution in \(J^{\mathrm{token}}_{\mathrm{OPD}}\) matches the autoregressive prefix distribution induced by \(\pi_\theta\), and we have
\begin{equation}
   J^{\mathrm{seq}}_{\mathrm{OPD}}(\theta)
   =
   J^{\mathrm{token}}_{\mathrm{OPD}}(\theta)
   \big|_{\bar{\theta}=\theta}.
   \label{eq:seq-local-value-equivalence}
\end{equation}
However, this value-level equivalence does not imply gradient-level equivalence. The reason is that \(J^{\mathrm{seq}}_{\mathrm{OPD}}\) differentiates through the autoregressive distribution over future prefixes, whereas \(J^{\mathrm{token}}_{\mathrm{OPD}}\) stops this dependence through \(d_{\bar{\theta}}(\bm{h}_t \mid \bm{x})\). This distinction leads to different temporal credit assignments, as shown below.

\begin{proposition}[Token-level OPD gradient]
   \label{prop:token-level-opd-gradient}
   Under the stop-gradient treatment of the prefix distribution
   \(d_{\bar{\theta}}(\bm{h}_t \mid \bm{x})\), the gradient of
   \(J^{\mathrm{token}}_{\mathrm{OPD}}(\theta)\) in \eqref{eq:local-opd-objective} is given by
   \begin{equation}
      \nabla_\theta J^{\mathrm{token}}_{\mathrm{OPD}}(\theta)
      =
      \mathbb{E}_{\bm{x} \sim \mathcal{D},\,
            \bm{y} \sim \pi_\theta(\cdot \mid \bm{x})}
      \left[
         \sum_{t=1}^{T}
         \Delta_t
         \nabla_\theta
         \log \pi_\theta(y_t \mid \bm{h}_t)
         \right].
      \label{eq:token-level-opd-gradient}
   \end{equation}

\end{proposition}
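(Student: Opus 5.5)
The plan is to differentiate each per-step KL term in \eqref{eq:local-opd-objective} separately. The prefix distribution \(d_{\bar{\theta}}(\bm{h}_t \mid \bm{x})\) is frozen, so the gradient passes inside the outer expectations, and for each fixed prefix \(\bm{h}_t\) I only need the gradient of the next-token reverse KL
\begin{equation}
   \mathbb{D}_{\mathrm{KL}}\bigl(\pi_\theta(\cdot \mid \bm{h}_t)\,\|\,\pi^*(\cdot \mid \bm{h}_t)\bigr)
   =
   \sum_{v \in \mathcal{V}} \pi_\theta(v \mid \bm{h}_t)\bigl(\log \pi_\theta(v \mid \bm{h}_t) - \log \pi^*(v \mid \bm{h}_t)\bigr).
\end{equation}
Because \(\mathcal{V}\) is finite, exchanging \(\nabla_\theta\) with this sum needs no justification.

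\textbf{Step 1: the per-prefix gradient.} Applying the product rule gives two pieces. The first is \(\sum_v \nabla_\theta \pi_\theta(v\mid\bm{h}_t)\,(\log\pi_\theta - \log\pi^*)\). The second is \(\sum_v \pi_\theta(v\mid\bm{h}_t)\,\nabla_\theta \log \pi_\theta(v\mid\bm{h}_t) = \nabla_\theta \sum_v \pi_\theta(v\mid\bm{h}_t) = 0\), since \(\pi^*\) does not depend on \(\theta\). In the first piece I use the log-derivative identity \(\nabla_\theta \pi_\theta = \pi_\theta \nabla_\theta \log \pi_\theta\). This yields
\begin{equation}
   \mathbb{E}_{y_t \sim \pi_\theta(\cdot\mid \bm{h}_t)}\bigl[\Delta_t \nabla_\theta \log \pi_\theta(y_t \mid \bm{h}_t)\bigr].
\end{equation}

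\textbf{Step 2: reassemble over trajectories.} I then set \(\bar{\theta}=\theta\) after differentiating, which is the intended meaning of the stop-gradient. The joint law of \(\bm{h}_t \sim d_\theta\) and \(y_t \sim \pi_\theta(\cdot\mid\bm{h}_t)\) is the marginal of \((\bm{y}_{<t}, y_t)\) under \(\bm{y} \sim \pi_\theta(\cdot\mid\bm{x})\). The integrand depends only on \(\bm{y}_{\le t}\), so each term can be written as an expectation over full trajectories. Summing over \(t\) and pulling the sum inside the trajectory expectation gives \eqref{eq:token-level-opd-gradient}.

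\textbf{Main obstacle.} The only delicate point is bookkeeping for the variable length \(T\). The sum \(\sum_{t=1}^T\) in \eqref{eq:local-opd-objective} is ambiguous, since \(T\) is random. I would read it as \(\sum_{t\ge1}\mathbb{E}_{\bm{h}_t}[\mathbf{1}\{t\le T\}\,\cdot\,]\), with \(d_{\bar\theta}\) assigning no mass to prefixes already past EOS. Equivalently, absorbing states contribute zero KL. Under that convention, \(\{t \le T\}\) is determined by the prefix \(\bm{h}_t\), so it too is frozen by the stop-gradient. The interchange of \(\sum_t\) with the expectation then follows from Tonelli/Fubini, assuming bounded length or integrability. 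The rest is the routine score-function computation above.
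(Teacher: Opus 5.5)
Your proposal is correct and follows essentially the same route as the paper's proof in Appendix~\ref{app:token-level-opd-gradient-proof}. You freeze $d_{\bar\theta}$, apply the product rule to the full-vocabulary next-token KL, drop the $\sum_a \pi_\theta\nabla_\theta\log\pi_\theta$ term, use the log-derivative identity, and then identify the (prefix, next-token) pair with an on-policy trajectory once $\bar\theta=\theta$. Your handling of the random length $T$ through the prefix-measurable indicator $\mathbf{1}\{t\le T\}$ is extra care the paper omits, since it implicitly treats $T$ as fixed, and it leaves the argument unchanged.
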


The proof is provided in Appendix~\ref{app:token-level-opd-gradient-proof}.

Consequently, the gradient of
\(J^{\mathrm{token}}_{\mathrm{OPD}}(\theta)\) recovers the practical OPD update in~\eqref{eq:opd-policy-gradient}. Meanwhile, the gradient of the sequence-level OPD objective \(J^{\mathrm{seq}}_{\mathrm{OPD}}(\theta)\) takes the following form:

\begin{proposition}[Sequence-level OPD gradient]
   \label{prop:sequence-level-opd-gradient}
   The gradient of the sequence-level OPD objective
   \(J^{\mathrm{seq}}_{\mathrm{OPD}}(\theta)\) in \eqref{eq:seq-opd-objective} is given by
   \begin{equation}
      \nabla_\theta J^{\mathrm{seq}}_{\mathrm{OPD}}(\theta)
      =
      \mathbb{E}_{\bm{x} \sim \mathcal{D},\,
            \bm{y} \sim \pi_\theta(\cdot \mid \bm{x})}
      \left[
         \sum_{t=1}^{T}
         \left(
         \sum_{t'=t}^{T}
         \Delta_{t'}
         \right)
         \nabla_\theta
         \log \pi_\theta(y_t \mid \bm{h}_t)
         \right].
      \label{eq:sequence-level-opd-gradient}
   \end{equation}
\end{proposition}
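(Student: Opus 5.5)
We prove the claim with the log-derivative (score-function) trick applied to the full trajectory distribution, followed by a causality argument that removes past log-ratios from each token's credit. Fix a prompt \(\bm{x}\) and write the inner objective as \(\mathbb{E}_{\bm{y}\sim\pi_\theta(\cdot\mid\bm{x})}[\Delta(\bm{y})]\), using the definition of reverse KL and \eqref{eq:opd-sequence-log-ratio}. Since \(\mathcal{D}\) does not depend on \(\theta\), we may differentiate inside the outer expectation. We assume the usual regularity conditions: finite vocabulary, bounded horizon (or an end-of-sequence token with finite support), and \(\pi^*>0\) wherever \(\pi_\theta>0\). Under these, differentiation and summation commute.

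\textbf{Step 1: product rule.} Writing the expectation as the sum \(\sum_{\bm{y}}\pi_\theta(\bm{y}\mid\bm{x})\,\Delta(\bm{y})\) and differentiating gives two terms:
\begin{equation}
\mathbb{E}_{\bm{y}}\big[\Delta(\bm{y})\,\nabla_\theta\log\pi_\theta(\bm{y}\mid\bm{x})\big]
+\mathbb{E}_{\bm{y}}\big[\nabla_\theta\log\pi_\theta(\bm{y}\mid\bm{x})\big].
\end{equation}
The teacher term \(\log\pi^*\) does not depend on \(\theta\), which is why only the student score appears in \(\nabla_\theta\Delta(\bm{y})\). The second term vanishes because \(\sum_{\bm{y}}\nabla_\theta\pi_\theta(\bm{y}\mid\bm{x})=\nabla_\theta 1=0\).

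\textbf{Step 2: expand by autoregression.} Substitute \(\nabla_\theta\log\pi_\theta(\bm{y}\mid\bm{x})=\sum_{t}\nabla_\theta\log\pi_\theta(y_t\mid\bm{h}_t)\) and \(\Delta(\bm{y})=\sum_{t'}\Delta_{t'}\). This yields the double sum
\begin{equation}
\sum_{t}\sum_{t'}\mathbb{E}\big[\Delta_{t'}\,\nabla_\theta\log\pi_\theta(y_t\mid\bm{h}_t)\big].
\end{equation}

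\textbf{Step 3: drop the past terms (main obstacle).} We show that every term with \(t'<t\) has zero expectation. The factor \(\Delta_{t'}\) depends only on \((\bm{x},\bm{y}_{\le t'})\), which is contained in \(\bm{h}_t\). Conditioning on \(\bm{h}_t\) therefore gives
\begin{equation}
\mathbb{E}\big[\Delta_{t'}\,\mathbb{E}_{y_t\sim\pi_\theta(\cdot\mid\bm{h}_t)}[\nabla_\theta\log\pi_\theta(y_t\mid\bm{h}_t)]\big]=0,
\end{equation}
because the inner expectation equals \(\sum_{y}\nabla_\theta\pi_\theta(y\mid\bm{h}_t)=0\).

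The care here lies in the random horizon \(T\). We handle it by padding each trajectory after the end-of-sequence token with deterministic tokens whose log-probabilities are zero under both \(\pi_\theta\) and \(\pi^*\). Each padded step then contributes \(\Delta_t=0\) and zero score, so the sums may be taken over a fixed maximal length. The tower-property argument applies at each step, with the event \(\{T\ge t\}\) being \(\bm{h}_t\)-measurable.

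\textbf{Step 4: collect terms.} Only the pairs with \(t'\ge t\) survive. For each \(t\) their sum is the reward-to-go credit \(\sum_{t'=t}^{T}\Delta_{t'}\) multiplying \(\nabla_\theta\log\pi_\theta(y_t\mid\bm{h}_t)\), which gives \eqref{eq:sequence-level-opd-gradient}. Comparing with Proposition~\ref{prop:token-level-opd-gradient}, the token-level gradient keeps only the diagonal terms \(t'=t\) of this sum.
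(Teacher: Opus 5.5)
Your proposal is correct and follows the paper's proof essentially step for step. Both arguments apply the score-function trick to the trajectory likelihood and drop the $\mathbb{E}[\nabla_\theta\log\pi_\theta(\bm{y}\mid\bm{x})]=0$ term, expand autoregressively into a double sum, and remove the $t'<t$ terms by conditioning on $\bm{h}_t$ and using the token-level score identity. Your padding treatment of the random horizon (with $\{T\ge t\}$ being $\bm{h}_t$-measurable) is an extra bit of rigor that the paper skips by summing over $\mathcal{V}^{T}$ as if $T$ were fixed.
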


The proof is provided in Appendix~\ref{app:sequence-level-opd-gradient-proof}.

Propositions~\ref{prop:token-level-opd-gradient} and~\ref{prop:sequence-level-opd-gradient}
show that practical OPD is a token-level approximation to sequence-level OPD that neglects the influence of subsequent tokens. As illustrated in Figure~\ref{fig:gamma-opd-overview}, under the sequence-level objective, the score term at step \(t\) is weighted by
the future log-ratio \(\sum_{t'=t}^{T}\Delta_{t'}\), since \(y_t\) affects all
future histories \(\bm{h}_{t+1},\ldots,\bm{h}_{T}\). In contrast, practical
OPD keeps only the local term \(\Delta_t\). This gap stems from the
stop-gradient treatment of \(d_{\bar{\theta}}(\bm{h}_t \mid \bm{x})\) in
~\eqref{eq:local-opd-objective}.

\begin{figure}[!t]
   \centering
   \includegraphics[width=0.85\linewidth]{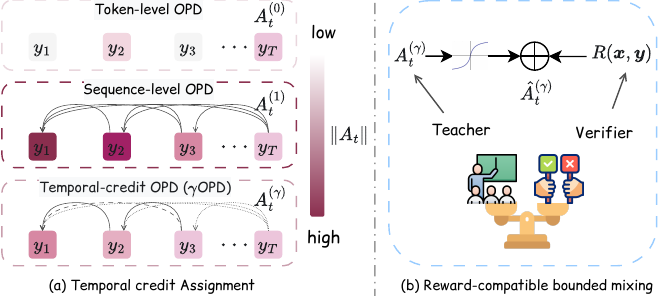}
   \caption{
      \textbf{Overview of \texorpdfstring{$\gamma$}{gamma}OPD.}
      (a) Token-level OPD considers only
      the log-ratio of the current token, whereas sequence-level OPD accumulates the
      log-ratios of all future tokens. $\gamma$OPD introduces a discount factor
      $\gamma$ to interpolate between these two extremes.
      (b) The $\gamma$OPD advantage is
      first normalized and then combined with the verifiable task reward, balancing
      dense teacher guidance with task-level verification signals.}
   \label{fig:gamma-opd-overview}
\end{figure}

\subsection{Temporal-Credit On-Policy Distillation (\texorpdfstring{$\gamma$}{gamma}OPD)}
\label{sec:temporal-credit-opd}

Compared with the token-level gradient in Proposition~\ref{prop:token-level-opd-gradient}, the sequence-level gradient in~\eqref{eq:sequence-level-opd-gradient} accounts for the indirect effect of each token on future autoregressive prefixes, and is therefore unbiased with respect to the sequence-level objective in~\eqref{eq:opd-reverse-kl}. However, for long reasoning trajectories, the accumulated future log-ratio
\(\sum_{t'=t}^{T}\Delta_{t'}\) can have large variance, which may destabilize training.

To balance bias and variance in temporal credit assignment, we propose
\emph{Temporal-Credit On-Policy Distillation ($\gamma$OPD)}. The key idea is to introduce a discount factor
\(\gamma \in [0,1]\) to control how much future OPD credit is assigned to the current token. Specifically, we define the temporal-credit surrogate gradient as
\begin{equation}
   g_{\gamma}(\theta)
   \triangleq -
   \mathbb{E}_{\bm{x} \sim \mathcal{D},\,
         \bm{y} \sim \pi_\theta(\cdot \mid \bm{x})}
   \left[
      \sum_{t=1}^{T}
      A_{t}^{(\gamma)}
      \nabla_\theta
      \log \pi_\theta(y_t \mid \bm{h}_t)
      \right],
   \label{eq:gamma-opd-gradient}
\end{equation}
where the discounted $\gamma\mathrm{OPD}$ advantage is defined as
\begin{equation}
   A_{t}^{(\gamma)}
   \triangleq
   -\sum_{t'=t}^{T}\gamma^{t'-t}\Delta_{t'}.
   \label{eq:gamma-opd-advantage}
\end{equation}
The discount factor \(\gamma\) controls the temporal horizon of OPD credit assignment. When \(\gamma=1\), \(A_{t}^{(1)}\) becomes the sequence-level OPD return-to-go, recovering~\eqref{eq:sequence-level-opd-gradient}. When \(\gamma=0\), it reduces to the token-level OPD advantage \(A_t^{(0)}=-\Delta_t\), recovering~\eqref{eq:token-level-opd-gradient}. Thus, for $0<\gamma<1$, $g_\gamma(\theta)$ is deliberately introduced
as a biased surrogate that interpolates between these two endpoint
gradients, allowing its variance to be controlled through \(\gamma\), as formalized in the following theorem.

\begin{theorem}[Variance Stability of $\gamma\mathrm{OPD}$]
   \label{thm:tc-opd-stability}
   Assume that the token-level OPD advantage has a bounded second moment,
   i.e., \(\mathbb{E}[(A_t^{(0)})^2]\leq\sigma_\Delta^2\) for all \(t\).
   Then, the sequence-level OPD credit admits a horizon-dependent variance
   bound, whereas the $\gamma\mathrm{OPD}$ credit admits a
   horizon-independent variance bound:
   \begin{align}
      \operatorname{Var}\!\left(
      A_t^{(1)}
      \right)
       & \leq
      (T-t+1)^2\sigma_\Delta^2,
      \label{eq:seq-opd-variance-bound}
      \\
      \operatorname{Var}\!\left(
      A_t^{(\gamma)}
      \right)
       & \leq
      \frac{\sigma_\Delta^2}{(1-\gamma)^2},
      \qquad
      \forall \gamma\in[0,1).
      \label{eq:tc-opd-variance-bound}
   \end{align}

\end{theorem}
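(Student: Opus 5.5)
The plan is to bound each variance by the corresponding second moment and then control that second moment with Minkowski's inequality in $L^2$, using only the hypothesis $\mathbb{E}[(A_t^{(0)})^2]=\mathbb{E}[\Delta_t^2]\le\sigma_\Delta^2$. No independence or correlation structure among the $\Delta_{t'}$ is needed. Worst-case (perfectly correlated) log-ratios are exactly what give the squared horizon factor in \eqref{eq:seq-opd-variance-bound}, so the triangle inequality is the natural tool.

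For any square-integrable random variable $Z$ we have $\operatorname{Var}(Z)\le\mathbb{E}[Z^2]=\|Z\|_2^2$, where $\|Z\|_2\triangleq(\mathbb{E}[Z^2])^{1/2}$ and the expectation is over $\bm{x}\sim\mathcal{D}$ and $\bm{y}\sim\pi_\theta(\cdot\mid\bm{x})$. The definition \eqref{eq:gamma-opd-advantage} gives $A_t^{(\gamma)}=\sum_{t'=t}^{T}\gamma^{t'-t}A_{t'}^{(0)}$. Minkowski's inequality then yields
\begin{equation}
   \bigl\|A_t^{(\gamma)}\bigr\|_2
   \le
   \sum_{t'=t}^{T}\gamma^{t'-t}\bigl\|A_{t'}^{(0)}\bigr\|_2
   \le
   \sigma_\Delta\sum_{k=0}^{T-t}\gamma^{k}.
\end{equation}
For $\gamma=1$ the sum equals $T-t+1$, and squaring gives \eqref{eq:seq-opd-variance-bound}. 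For $\gamma\in[0,1)$ the finite geometric sum is at most $1/(1-\gamma)$, and squaring gives \eqref{eq:tc-opd-variance-bound}; this bound does not depend on $T$. As a consistency check, at $\gamma=0$ the bound reduces to $\sigma_\Delta^2$, matching the hypothesis directly.

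The main subtlety is that $T=|\bm{y}|$ is itself random. Terms with $t'>T$ (or $t>T$) should be read as $0$, for example by padding with $\Delta_{t'}=0$ after termination, and the hypothesis should be read as holding for these padded variables. With that convention the sum can be taken over a fixed index range, and Minkowski's inequality still applies termwise. In \eqref{eq:seq-opd-variance-bound}, $T$ should then be read as a deterministic horizon, i.e., a maximum length or the length conditioned upon. The bound in \eqref{eq:tc-opd-variance-bound} needs no such caveat, because the infinite geometric sum dominates every finite one.

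If a sharper statement were desired, I would instead expand $\mathbb{E}[(\sum_{k}\gamma^k A^{(0)}_{t+k})^2]$ and apply Cauchy–Schwarz to each cross term, $\mathbb{E}[A^{(0)}_iA^{(0)}_j]\le\sigma_\Delta^2$. This gives the same bound and is equivalent, so I would present the Minkowski argument as the proof.
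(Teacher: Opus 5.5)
Your proof is correct and is essentially the paper's argument: both apply the triangle (Minkowski) inequality in $L^2$ to the discounted sum of $A_\tau^{(0)}$ and then bound the geometric series. The only cosmetic difference is that the paper first centers each $A_\tau^{(0)}$, so that the $L^2$ norm of the sum is exactly the standard deviation, using $\operatorname{Var}(A_\tau^{(0)})\le\sigma_\Delta^2$. You instead bound the variance by the raw second moment up front. Your remark on the random length $T$ is a useful clarification that the paper leaves implicit.
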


The proof is provided in Appendix~\ref{app:tc-opd-stability-proof}.

Theorem~\ref{thm:tc-opd-stability} shows that the variance of
sequence-level OPD credit can grow quadratically with the remaining
sequence horizon in the worst case, whereas $\gamma\mathrm{OPD}$
admits a horizon-independent variance bound for any fixed
\(\gamma<1\). Adjusting \(\gamma\) therefore provides a principled
trade-off between training stability and long-horizon credit
propagation.

\subsection{\texorpdfstring{$\gamma\mathrm{OPD}$}{gammaOPD} with Reward-Compatible Bounded Mixing (RBM)}
\label{sec:reward-enhanced-tcopd}

Although $\gamma\mathrm{OPD}$ improves temporal credit assignment, its
optimization signal is still fundamentally derived from the teacher
distribution. Consequently, the student may remain constrained by the
teacher policy and lack an explicit task-level optimization signal for
solving verifiable reasoning problems.

To move beyond this teacher-imitation ceiling, we incorporate the verifiable
task reward $R(\bm{x},\bm{y})\in\{-1,1\}$ into $\gamma\mathrm{OPD}$.
A naive combination, however, can be problematic because the magnitude of
the teacher-derived advantage $A_t^{(\gamma)}$ may vary substantially across
responses and potentially dominate the bounded task reward. We therefore
introduce a \emph{reward-compatible bounded mixing (RBM) mechanism}: the $\gamma\mathrm{OPD}$
credit is first calibrated by its response-wise mean absolute magnitude,
then bounded through a softsign transformation, and finally combined with
the verifiable reward.

These operations can be written compactly as
\begin{equation}
   \widehat{A}_{t}^{(\gamma)}
   \triangleq
   \frac{
      A_{t}^{(\gamma)}
   }{
      \frac{1}{T}\sum_{k=1}^{T}\left|A_{k}^{(\gamma)}\right|
      +
      \left|A_{t}^{(\gamma)}\right|
   } + R(\bm{x},\bm{y}).
   \label{eq:reward-compatible-bounded-tcopd}
\end{equation}
When the denominator vanishes, equivalently when
$A_k^{(\gamma)}=0$ for all $k$, we define the normalized teacher term as
zero.
The first term is equivalent to applying softsign after response-wise
mean-absolute normalization. It is monotonic in $A_t^{(\gamma)}$ and bounded
within $(-1,1)$, thereby retaining the relative token-level OPD credit while
preventing the teacher-derived signal from overwhelming the task reward.
Since $R(\bm{x},\bm{y})\in\{-1,1\}$, it follows that
$\operatorname{sign}(\widehat{A}_{t}^{(\gamma)})=R(\bm{x},\bm{y})$.
Thus, RBM is reward-compatible by construction: the verifiable reward
determines whether the sampled response is reinforced or suppressed, while
the bounded $\gamma\mathrm{OPD}$ credit modulates the token-level update
strength.

The resulting reward-compatible $\gamma\mathrm{OPD}$ surrogate gradient is
\begin{equation}
   \hat{g}_{\gamma}(\theta)
   \triangleq
   -
   \mathbb{E}_{\bm{x}\sim\mathcal{D},\,
         \bm{y}\sim\pi_\theta(\cdot\mid\bm{x})}
   \left[
      \sum_{t=1}^{T}
      \widehat{A}_{t}^{(\gamma)}
      \nabla_\theta
      \log\pi_\theta(y_t\mid\bm{h}_t)
      \right].
   \label{eq:reward-enhanced-tcopd-gradient}
\end{equation}

Therefore, RBM combines sparse verifiable rewards for task-level learning beyond teacher imitation with bounded $\gamma\mathrm{OPD}$ for dense token-level credit assignment.
Unless otherwise specified, $\gamma\mathrm{OPD}$ refers to its use within RBM throughout the remainder of this paper.

\newcommand{\MathResultsTable}{%
\begin{table}[t]
   \centering
   \caption{
      \textbf{Mathematical reasoning results under vanilla and size-mismatch distillation.}
      We report accuracy and pass rate across four mathematical reasoning benchmarks.
   }
   \label{tab:math_distillation}

   \footnotesize
   \setlength{\tabcolsep}{5.5pt}
   \renewcommand{\arraystretch}{1.25}

   \begin{adjustbox}{width=5.5in}
      \begin{tabular}{
         l !{\color{gray!55}\vrule width 0.35pt}
         cc !{\color{gray!55}\vrule width 0.35pt}
         cc !{\color{gray!55}\vrule width 0.35pt}
         cc !{\color{gray!55}\vrule width 0.35pt}
         cc !{\color{gray!55}\vrule width 0.35pt}
         >{\columncolor{gray!10}}c
         >{\columncolor{gray!10}}c
         }

         \toprule

         \multirow{2}{*}{Method}
          & \multicolumn{2}{c}{AIME24}
          & \multicolumn{2}{c}{AIME25}
          & \multicolumn{2}{c}{AMC23}
          & \multicolumn{2}{c}{MATH500}
          & \multicolumn{2}{>{\columncolor{gray!10}}c}{Average}
         \\

         \cmidrule(lr){2-3}
         \cmidrule(lr){4-5}
         \cmidrule(lr){6-7}
         \cmidrule(lr){8-9}
         \cmidrule(lr){10-11}

          & Acc.                                                & Pass
          & Acc.                                                & Pass
          & Acc.                                                & Pass
          & Acc.                                                & Pass
          & Acc.                                                & Pass
         \\
         \specialrule{0.35pt}{0pt}{0pt}

         \rowcolor{headerblue}
         \multicolumn{11}{c}{
            \rule{0pt}{2.6ex}%
            \textbf{Qwen3-4B-Math $\rightarrow$ Qwen3-4B}%
            \rule[-0.8ex]{0pt}{0pt}
         }
         \\[-0.35pt]
         \specialrule{0.35pt}{0pt}{2pt}

         Student
          & 22.60                                               & 60.00
          & 20.83                                               & 33.33
          & 60.16                                               & 92.50
          & 67.40                                               & 72.60
          & 42.75                                               & 64.61
         \\

         Teacher
          & 57.40                                               & 76.67
          & 51.67                                               & 66.67
          & 93.52                                               & 97.50
          & 69.80                                               & 78.40
          & 68.10                                               & 79.81
         \\

         \cmidrule(lr){1-11}

         JustRL
          & 53.54                                               & 83.33
          & 42.50                                               & 56.67
          & 88.75                                               & 97.50
          & 68.00                                               & 70.40
          & 63.20                                               & 76.97
         \\

         STAPO
          & 55.31                                               & 83.33
          & 50.00                                               & 66.67
          & 90.16                                               & 97.50
          & 68.35                                               & 69.80
          & 65.95                                               & 79.33
         \\

         \cmidrule(lr){1-11}

         OPD
          & 56.46                                               & 80.00
          & 50.83                                               & 63.33
          & 92.81                                               & 95.00
          & 68.05                                               & 72.20
          & 67.04                                               & 77.63
         \\

         ExOPD
          & 57.19                                               & 83.33
          & 52.50                                               & 63.33
          & 93.13                                               & \textbf{97.50}
          & 68.90                                               & 74.20
          & 67.93                                               & 79.59
         \\

         REOPOLD
          & 56.98                                               & 83.33
          & 51.67                                               & 63.33
          & \textbf{93.59}                                      & \textbf{97.50}
          & 68.70                                               & 72.60
          & 67.74                                               & 79.19
         \\

         AOPD
          & 56.56                                               & 86.67
          & 52.50                                               & 66.67
          & 93.28                                               & 97.50
          & 69.35                                               & 73.00
          & 67.92                                               & 80.96
         \\

         TOPD
          & 54.79                                               & 80.00
          & 51.67                                               & 63.33
          & 93.44                                               & 97.50
          & 69.95                                               & 72.40
          & 67.46                                               & 78.31
         \\

         \rowcolor{oursblue}
         \textbf{$\gamma$OPD}
          & \textbf{60.94}                                      & \textbf{86.67}
          & \textbf{54.17}                                      & \textbf{66.67}
          & 92.89                                               & \textbf{97.50}
          & \textbf{69.95}                                      & \textbf{78.80}
          & \textbf{69.49}                                      & \textbf{82.41}
         \\

         \specialrule{0.35pt}{0pt}{0pt}
         \rowcolor{headerblue}
         \multicolumn{11}{c}{
            \rule{0pt}{2.6ex}%
            \textbf{Qwen3-4B-Math $\rightarrow$ Qwen3-1.7B}%
            \rule[-0.8ex]{0pt}{0pt}
         }
         \\[-0.35pt]
         \specialrule{0.35pt}{0pt}{2pt}

         Student
          & 11.77                                               & 46.67
          & 8.33                                                & 20.00
          & 39.84                                               & 90.00
          & 59.20                                               & 72.80
          & 29.79                                               & 57.37
         \\

         Teacher
          & 57.40                                               & 76.67
          & 51.67                                               & 66.67
          & 93.52                                               & 97.50
          & 69.80                                               & 78.40
          & 68.10                                               & 79.81
         \\

         \cmidrule(lr){1-11}

         JustRL
          & 37.60                                               & 66.67
          & 36.67                                               & 50.00
          & 78.67                                               & 95.00
          & 65.10                                               & 68.80
          & 54.51                                               & 70.12
         \\

         STAPO
          & 35.52                                               & 63.33
          & 30.83                                               & 46.67
          & 77.73                                               & 95.00
          & 64.95                                               & 68.80
          & 52.26                                               & 68.45
         \\

         \cmidrule(lr){1-11}

         OPD
          & 37.71                                               & 66.67
          & 31.25                                               & 40.00
          & 76.02                                               & 92.50
          & 65.40                                               & 69.80
          & 52.60                                               & 67.24
         \\

         ExOPD
          & 38.54                                               & 66.67
          & 31.67                                               & 43.33
          & 76.72                                               & 92.50
          & 65.05                                               & 68.40
          & 53.00                                               & 67.73
         \\

         REOPOLD
          & 37.50                                               & 66.67
          & 30.83                                               & 43.33
          & 77.73                                               & 92.50
          & 66.70                                               & 69.00
          & 53.19                                               & 67.88
         \\

         AOPD
          & 37.71                                               & 70.00
          & 32.50                                               & 43.33
          & 79.38                                               & 95.00
          & 65.85                                               & 70.80
          & 53.86                                               & 69.78
         \\

         TOPD
          & 39.38                                               & 70.00
          & 30.83                                               & 36.67
          & 79.14                                               & 95.00
          & 65.60                                               & 70.00
          & 53.74                                               & 67.92
         \\

         \rowcolor{oursblue}
         \textbf{$\gamma$OPD}
          & \textbf{42.08}                                      & \textbf{70.00}
          & \textbf{35.00}                                      & \textbf{46.67}
          & \textbf{80.86}                                      & \textbf{95.00}
          & \textbf{68.10}                                      & \textbf{73.20}
          & \textbf{56.51}                                      & \textbf{71.22}
         \\

         \bottomrule
      \end{tabular}%
   \end{adjustbox}
   \vspace{-1em}
\end{table}%
}
\MathResultsTable
\section{Experiments}
\label{sec:experiments}

\subsection{Settings}
\label{sec:experimental-settings}

\textbf{Benchmarks.}
Our experiments evaluate both mathematical and code reasoning abilities. For mathematical reasoning, we train on DeepMath~\citep{he2025deepmath103k}, retaining problems with difficulty level at least 6, and evaluate on AIME24~\citep{li2024numinamath}, AIME25~\citep{opencompass2025aime}, AMC23~\citep{li2024numinamath}, and MATH500~\citep{hendrycks2021math}. For code reasoning, we train on the 25K-sample Eurus-RL-Code dataset~\citep{cui2026process} and evaluate on HumanEval+, MBPP+~\citep{liu2023evalplus}, and the v6 split of LiveCodeBench~\citep{jain2025livecodebench}, covering problems from February 2025 to May 2025. Detailed training and evaluation configurations are provided in Appendix~\ref{app:experiment-details}.

\textbf{Models.}
We conduct experiments under three distillation settings:
(1) \emph{vanilla distillation}, where we distill a Qwen3-4B-Math \citep{yang2026learning} teacher into a Qwen3-4B \citep{yang2025qwen3} student for mathematical reasoning;
(2) \emph{size-mismatch distillation}, where we distill the same Qwen3-4B-Math teacher into a smaller Qwen3-1.7B student for mathematical reasoning; and
(3) \emph{multi-teacher distillation}, where we distill Qwen3-4B-Math and Qwen3-4B-Code~\citep{yang2026learning} teachers into a single Qwen3-4B student for both math and code reasoning.

\textbf{Baselines.}
We compare $\gamma\mathrm{OPD}$ against representative OPD-based baselines, including vanilla OPD~\citep{lu2025onpolicy}, ExOPD~\citep{yang2026learning}, REOPOLD~\citep{ko2026scaling}, AOPD~\citep{jia2026asymmetric}, and TOPD~\citep{zhang2026full}, as well as RL-based methods, including JustRL~\citep{he2026justrl} and STAPO~\citep{liu2026stapo}. For $\gamma\mathrm{OPD}$, we use $\gamma=0.99$ by default unless otherwise specified. All methods are implemented based on \texttt{veRL}~\citep{sheng2025hybridflow}.

\begin{figure}[!t]
   \centering
   \includegraphics[width=0.99\linewidth]{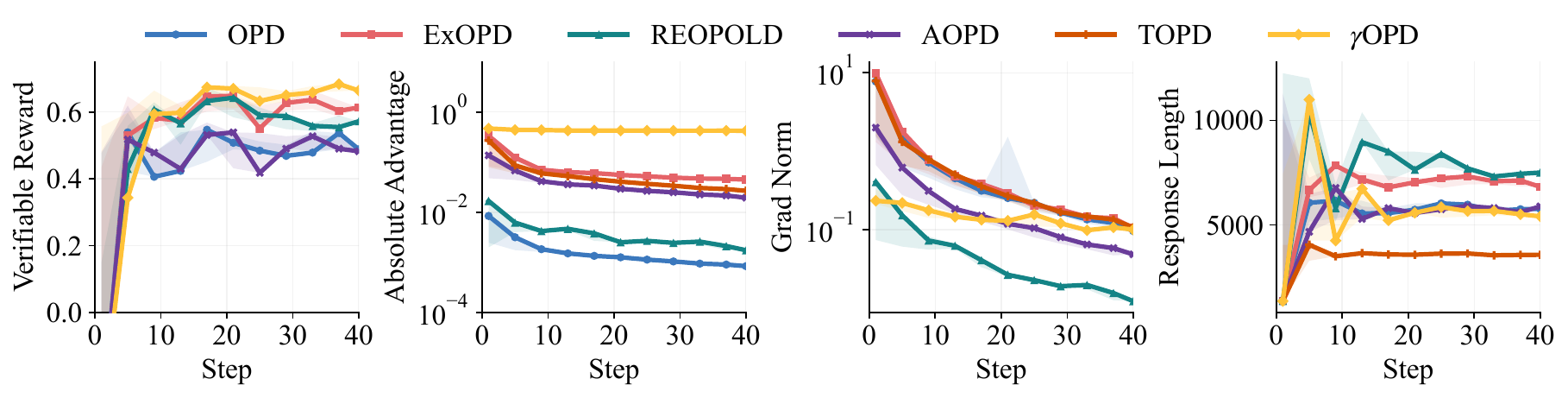}
   \vspace{-1em}
   \caption{
      \textbf{Training dynamics under vanilla distillation.}
      Verifiable reward, absolute advantage, gradient norm, and response length throughout training are reported. Due to the truncation mechanism, TOPD's verifiable reward remains mostly below zero throughout training.
   }
   \label{fig:vanilla-training-dynamics}
   \vspace{-1em}
\end{figure}

\subsection{Main Results}
\label{sec:main-results}

\textbf{Vanilla Distillation.}
As shown in Table~\ref{tab:math_distillation}, $\gamma\mathrm{OPD}$
consistently outperforms existing OPD baselines in the vanilla distillation
setting. Compared with the strongest baseline for each metric, $\gamma\mathrm{OPD}$ achieves relative
improvements of $2.30\%$ in AvgAcc and $1.80\%$ in AvgPass. Notably, it also surpasses
the teacher on both averaged metrics. These results demonstrate the
effectiveness of $\gamma\mathrm{OPD}$ and suggest that the proposed
RBM can better leverage dense teacher supervision while enabling improvement beyond direct imitation.

\textbf{Size-Mismatch Distillation.}
The size-mismatch setting (Qwen3-4B-Math $\rightarrow$ Qwen3-1.7B) is more
challenging because the student has substantially lower capacity than the
teacher. Consequently, as shown in Table~\ref{tab:math_distillation}, all
student methods remain below the teacher. Nevertheless, $\gamma\mathrm{OPD}$
substantially narrows the performance gap and achieves the best overall
performance. Compared with the strongest baseline for each metric,
$\gamma\mathrm{OPD}$ achieves relative improvements of $4.92\%$ in AvgAcc and $2.06\%$ in AvgPass.
These results highlight the benefit of temporal credit assignment
when distilling a stronger teacher into a lower-capacity student.

\textbf{Multi-Teacher Distillation.}
We next evaluate whether $\gamma\mathrm{OPD}$ remains effective in the
multi-teacher distillation setting. As shown in
Table~\ref{tab:multi_teacher_results}, it consistently performs strongly
across mathematical reasoning benchmarks and achieves the best results on
HumanEval+ and LCB among the code reasoning tasks. Overall,
$\gamma\mathrm{OPD}$ improves TotalAvg by $1.87\%$ over the strongest
baseline, AOPD. Notably, it also outperforms the teacher on average in both
mathematical and code reasoning. These results demonstrate that
$\gamma\mathrm{OPD}$ remains effective when jointly training a shared student with domain-specialized teachers.

\begin{table}[!t]
   \centering
   \caption{
      \textbf{Accuracy on mathematical and code reasoning benchmarks under multi-teacher distillation.}
      TotalAvg is the equal-weighted average of the mean mathematical score and the mean code score.
   }
   \label{tab:multi_teacher_results}

   \footnotesize
   \setlength{\tabcolsep}{5.1pt}
   \renewcommand{\arraystretch}{1.25}

   \begin{adjustbox}{max width=5.2in,center}
      \begin{tabular}{
         l !{\color{gray!55}\vrule width 0.35pt}
         cccc !{\color{gray!55}\vrule width 0.35pt}
         ccc !{\color{gray!55}\vrule width 0.35pt}
         >{\columncolor{gray!12}}c
         }

         \toprule

         \multirow{2}{*}{Method}
          & \multicolumn{4}{c}{Mathematical Reasoning}
          & \multicolumn{3}{c}{Code Reasoning}
          &
         \\

         \arrayrulecolor{gray!55}
         \cline{2-5}
         \cline{6-8}
         \arrayrulecolor{black}

          & AIME24
          & AIME25
          & AMC23
          & MATH500
          & HumanEval+
          & MBPP+
          & LCB
          & \multirow{-2}{*}{TotalAvg}
         \\

         \midrule

         Student
          & 22.60
          & 20.83
          & 60.16
          & 67.40
          & 75.61
          & 61.90
          & 17.14
          & 47.15
         \\

         Teacher
          & 57.40
          & 51.67
          & 93.52
          & 69.80
          & 82.32
          & 70.63
          & 25.71
          & 63.82
         \\

         \specialrule{0.35pt}{0pt}{0pt}

         OPD
          & 55.21
          & 52.50
          & 93.13
          & 68.65
          & 84.15
          & 67.99
          & 26.00
          & 63.37
         \\

         ExOPD
          & 57.50
          & 52.50
          & 92.34
          & 68.25
          & 85.37
          & \textbf{71.16}
          & 26.71
          & 64.36
         \\

         REOPOLD
          & 58.13
          & 52.08
          & 91.88
          & 68.55
          & 82.32
          & 69.31
          & 28.00
          & 63.77
         \\

         AOPD
          & 55.52
          & \textbf{55.83}
          & 93.05
          & 68.70
          & 84.76
          & 69.05
          & 26.71
          & 64.22
         \\

         TOPD
          & 57.81
          & 51.67
          & 92.34
          & \textbf{69.50}
          & 84.15
          & 69.58
          & 27.86
          & 64.18
         \\

         \rowcolor{oursblue}
         \textbf{$\gamma$OPD}
          & \textbf{59.27}
          & 55.00
          & \textbf{94.38}
          & 69.00
          & \textbf{89.02}
          & 70.63
          & \textbf{28.14}
          & \textbf{66.00}
         \\

         \bottomrule
      \end{tabular}
   \end{adjustbox}
   \vspace{-2em}
\end{table}

\textbf{Training Dynamics.}
We further visualize the training dynamics of different methods under the vanilla distillation setting in Figure~\ref{fig:vanilla-training-dynamics}. Throughout training, $\gamma\mathrm{OPD}$ consistently achieves the highest verifiable reward, indicating the largest proportion of correctly solved problems. It also maintains a stable absolute OPD advantage and exhibits the smallest fluctuations in gradient norm among all methods, suggesting more consistent temporal credit assignment and optimization. Moreover, the response length of $\gamma\mathrm{OPD}$ converges to a shorter and more stable range than those of most baselines, with the exception of TOPD, which explicitly truncates the distillation signal. Together, these results suggest that $\gamma\mathrm{OPD}$ enables more stable optimization while achieving stronger mathematical reasoning performance.

\subsection{Ablation Studies}
\label{sec:ablation}

We isolate the effects of temporal discounting, reward mixing, and bounded normalization on the AIME benchmarks in Table~\ref{tab:opd-ablation}. Temporal discounting alone improves the average accuracy by $2.04$ points over vanilla OPD. Removing temporal discounting from the full method while retaining reward mixing and normalization reduces the gain from $3.91$ to $2.00$ points, confirming its complementary contribution. Moreover, adding naive reward mixing to temporal discounting brings only a marginal $0.05$-point improvement, whereas the full combination including bounded normalization yields the strongest overall performance. These results suggest that temporal credit assignment and reward-compatible normalization provide complementary benefits, with their combination yielding the strongest performance.
We further analyze the computational and memory overhead of the individual components of $\gamma\mathrm{OPD}$. Compared with standard OPD, $\gamma\mathrm{OPD}$ introduces only about $0.1\%$ additional computation time with negligible memory overhead; detailed profiling results are provided in Appendix~\ref{app:internal-overhead-gamma-opd}.

\begin{table}[t]
   \centering
   \caption{
      \textbf{Ablation study of the proposed components on AIME benchmarks.}
      $\gamma$, Mix, and Norm correspond to temporal discounting, reward mixing, and bounded normalization, respectively.
      $\Delta$ Avg. denotes the absolute improvement in average accuracy over the vanilla OPD baseline.
   }
   \label{tab:opd-ablation}
   \setlength{\tabcolsep}{6pt}
   \renewcommand{\arraystretch}{1.12}
   \begin{adjustbox}{width=3.2in,center}
   \begin{tabular}{ccccccc}
      \toprule
      \multicolumn{3}{c}{Components}
               & \multicolumn{3}{c}{Accuracy (\%)}
               & \multirow{2}{*}{$\Delta$ Avg.}                                            \\
      \cmidrule(lr){1-3}
      \cmidrule(lr){4-6}
      $\gamma$ & Mix                               & Norm
               & AIME24                            & AIME25            & Avg.
               &                                                                           \\
      \midrule

      \rowcolor{gray!10}
      \multicolumn{3}{c}{Vanilla OPD}
               & 56.46                             & 50.83             & 53.65
               & --                                                                        \\

      \cmark   & --                                & --
               & 58.29                             & 53.08             & 55.69
               & +2.04                                                                     \\

      \cmark   & \cmark                            & --
               & \underline{58.81}                 & 52.67             & \underline{55.74}
               & +2.09                                                                     \\
      --       & \cmark                            & \cmark
               & 57.55                             & \underline{53.75} & 55.65
               & +2.00                                                                     \\

      \rowcolor{oursblue}
      \cmark   & \cmark                            & \cmark
               & \textbf{60.94}                    & \textbf{54.17}    & \textbf{57.56}
               & \textbf{+3.91}                                                            \\

      \bottomrule
   \end{tabular}
   \end{adjustbox}
\end{table}

\subsection{Detailed Analysis}
\label{sec:detailed-analysis}

\textbf{Sensitivity to $\gamma$.}
We further investigate the sensitivity of temporal credit assignment to the discount factor $\gamma$ without RBM. Specifically, we distill a Qwen3-4B-Math teacher into a Qwen3-1.7B student using different values of $\gamma$, with the results shown in Figure~\ref{fig:gamma-sensitivity}. Here, $\gamma=0$ reduces to local OPD, whereas $\gamma=1$ corresponds to the undiscounted sequence-level return-to-go. Due to the long response horizon, the sequence-level variant produces substantially larger gradient norms and suffers from rapid entropy collapse, resulting in inferior reasoning performance. In contrast, local OPD and the discounted variants steadily improve during training. Among them, $\gamma=0.99$ achieves the best performance, outperforming smaller values such as $\gamma=0.9$, while exhibiting distinct gradient-norm and entropy dynamics. These results suggest that a large but sub-unity discount factor provides the best balance between long-range temporal credit assignment and optimization stability.

\begin{figure}[!t]
   \centering
   \includegraphics[width=0.9\linewidth]{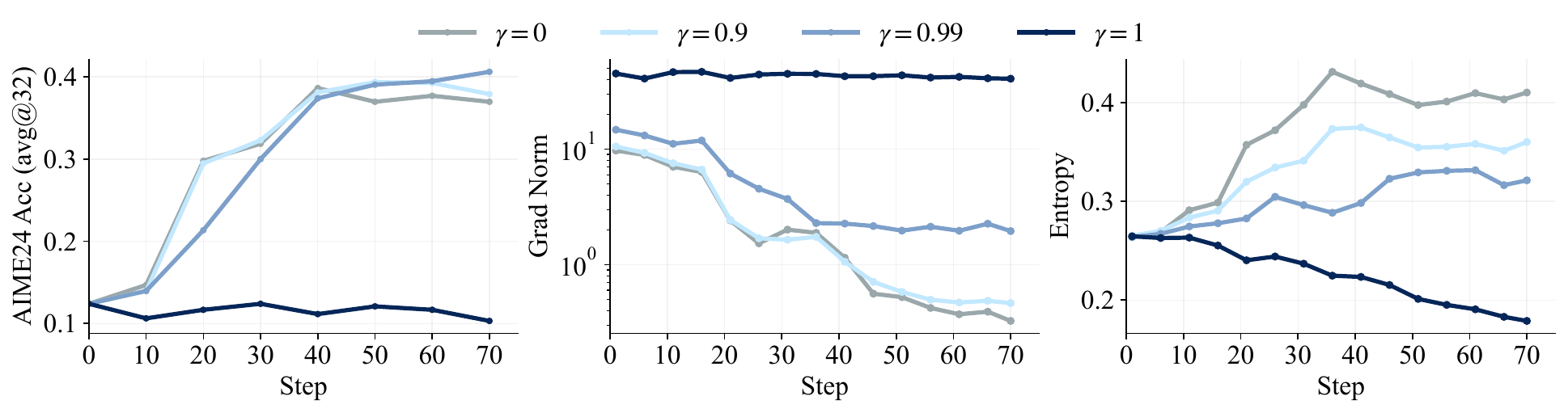}
   \caption{
      \textbf{Training dynamics under different \(\gamma\).}
      OPD distills a Qwen3-4B-Math teacher into a Qwen3-1.7B student, with validation accuracy, gradient norm, and policy entropy reported.
   }
   \label{fig:gamma-sensitivity}
\end{figure}

\begin{figure}[!t]
   \centering
   \includegraphics[width=0.98\linewidth]{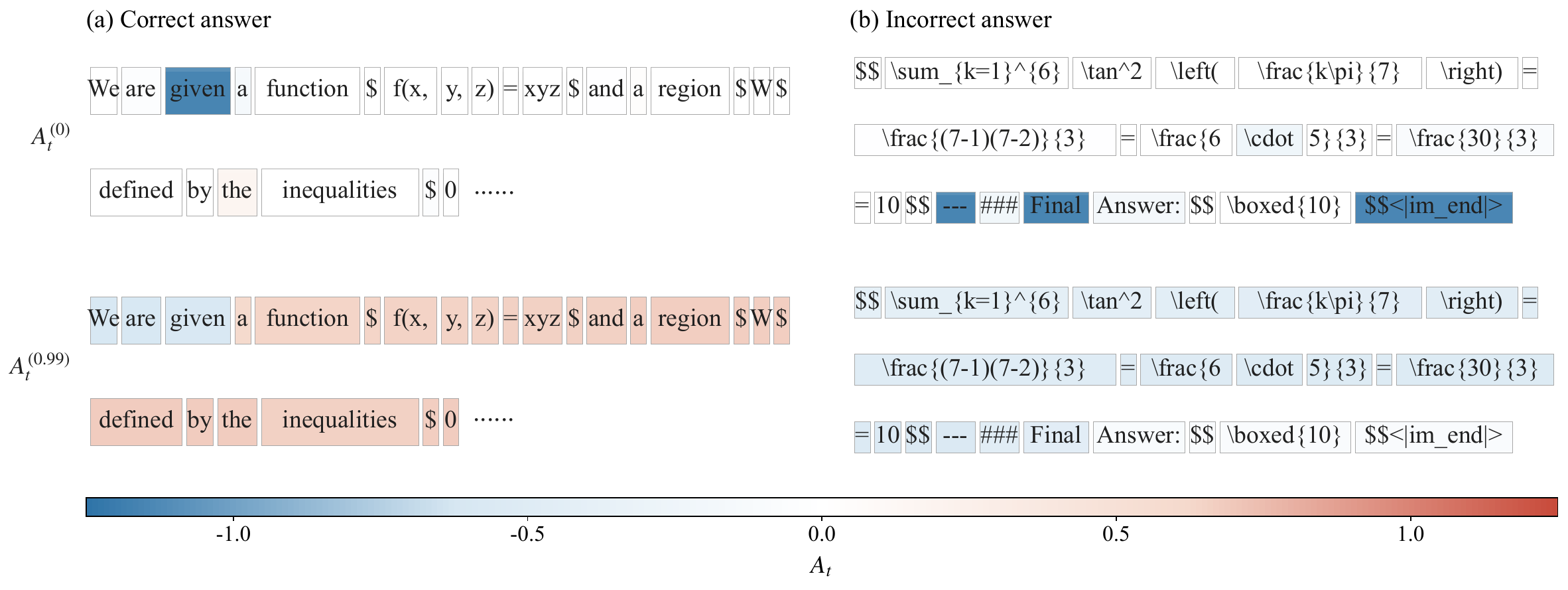}
   \caption{
      \textbf{Token-level advantage visualization.}
      (a) Correct response; (b) incorrect response.
      The top row shows normalized $A_t^{(0)}$, and the bottom row shows normalized $A_t^{(0.99)}$.
      Blue and red denote negative and positive advantages, respectively, with color intensity indicating magnitude.
      Complete response examples are provided in Appendix~\ref{app:complete-token-level-advantages}.
   }
   \label{fig:token-level-advantages}
\end{figure}

\textbf{Visualization of Token Advantages.}
We further visualize the normalized token-level advantages $A_t^{(0)}$ and $A_t^{(0.99)}$ in Figure~\ref{fig:token-level-advantages}. In the correct response, $A_t^{(0)}$ provides sparse local supervision, assigning a noticeable negative signal to only a few tokens while exerting little influence on most of the reasoning process. In contrast, $A_t^{(0.99)}$ propagates information from subsequent steps, producing a smoother signal and assigning stronger positive credit to the key mathematical derivation while mildly penalizing redundant text.
For the incorrect response, $A_t^{(0)}$ strongly penalizes several tokens that are only weakly related to the actual reasoning error, whereas $A_t^{(\gamma)}$ concentrates stronger negative credit on the erroneous formula derivation without excessively penalizing the final token. These examples illustrate that temporal credit assignment can redistribute supervision toward reasoning steps that are more relevant to the final outcome, yielding smoother and more semantically aligned token-level signals. Complete response-level visualizations are provided in Appendix~\ref{app:complete-token-level-advantages}.

\section{Related Work}
\label{sec:related-work}

\textbf{Credit Assignment for OPD.}
Recent studies have explored credit assignment in OPD, motivated by the potentially unreliable reasoning trajectories generated by relatively weak student policies\citep{liu2026stapo,yu2026taming}. Truncation-based methods~\citep{zhou2026less,zhang2026full} terminate rollouts early or mask the learning signals of tokens beyond selected positions, thereby reducing the influence of unreliable continuations. Entropy-aware OPD~\citep{jin2026entropy} augments reverse KL with forward KL on tokens where the teacher has high entropy, while ExOPD~\citep{yang2026learning} introduces a reference model to calibrate the OPD advantage. \citet{fu2026revisiting} analyze the bias--variance trade-off between token-level and sequence-level reverse-KL estimators and propose teacher top-$K$ local-support matching, yet a principled balance between sequence-level objective fidelity and token-level optimization stability remains unresolved.

\textbf{Reward-Guided OPD.}
Recent studies incorporate verifiable rewards into OPD to complement teacher-derived supervision. REOPOLD~\citep{ko2026scaling} combines reward clipping, entropy-based sampling, and exploration-to-refinement scheduling, while SCOPE~\citep{zheng2026scope} routes incorrect trajectories to teacher-perplexity-weighted KL distillation and correct trajectories to student-perplexity-weighted MLE. AOPD~\citep{jia2026asymmetric} preserves positive reinforcement while replacing non-positive token updates with localized teacher imitation, whereas RWOPD~\citep{zou2026rewardweighted} weights teacher KL gradients using verifier rewards. Other works~\citep{ding2026safopd,zhan2026bicriteria} further stabilize optimization through advantage compression and warmup-then-anneal scheduling. Nevertheless, these methods often introduce additional complexity through group-based advantage estimation, log-ratio correction, or verifier-weighted teacher KL objectives, potentially incurring substantial computational overhead.

\section{Conclusion}

In this work, we unified token-level and sequence-level OPD from a temporal credit assignment perspective and proposed $\gamma\mathrm{OPD}$ to balance long-horizon supervision and optimization stability. We further introduced RBM to combine discounted teacher guidance with verifiable outcome rewards. Empirical results on mathematical and code reasoning demonstrate that $\gamma\mathrm{OPD}$ remains effective across different teacher--student configurations and distillation scenarios.

Due to computational constraints, our experiments are currently limited to models with fewer than 10B parameters. Evaluating $\gamma\mathrm{OPD}$ at larger scales and exploring more flexible temporal credit assignment schemes, such as adaptive or entropy-aware discounting, are promising directions for future work.

\bibliography{iclr2027_conference}

@misc{agarwal2024onpolicy,
  author = {Agarwal, Rishabh and Vieillard, Nino and Zhou, Yongchao and Stanczyk, Piotr and Ramos Garea, Sabela and Geist, Matthieu and Bachem, Olivier},
  title = {On-Policy Distillation of Language Models: Learning from Self-Generated Mistakes},
  year = {2024},
  howpublished = {International Conference on Learning Representations}
}

@misc{cui2026process,
  author = {Cui, Ganqu and Yuan, Lifan and Wang, Zefan and Wang, Hanbin and Zhang, Yuchen and Chen, Jiacheng},
  title = {Process Reinforcement through Implicit Rewards},
  year = {2026},
  eprint = {2502.01456},
  archivePrefix = {arXiv}
}

@misc{ding2026safopd,
  author = {Ding, Yifan and Wei, Xincheng and Li, Yoshua Y. and Li, Ziheng and Lu, Yuquan and Zhang, Siyu and Ma, Dongsheng and Weng, Rongxiang and Cai, Xunliang and Chen, Yun},
  title = {SAF-OPD: Stable Advantage Fusion for On-Policy Distillation},
  year = {2026},
  eprint = {2607.29209},
  archivePrefix = {arXiv}
}

@misc{fu2026revisiting,
  author = {Fu, Yuqian and Huang, Haohuan and Jiang, Kaiwen and Liu, Jiacai and Jiang, Zhuo and Zhu, Yuanheng and Zhao, Dongbin},
  title = {Revisiting On-Policy Distillation: Empirical Failure Modes and Simple Fixes},
  year = {2026},
  eprint = {2603.25562},
  archivePrefix = {arXiv}
}

@misc{glm5team2026glm5vibecodingagentic,
  author = {{GLM-5 Team}},
  title = {GLM-5: From Vibe Coding to Agentic Engineering},
  year = {2026},
  eprint = {2602.15763},
  archivePrefix = {arXiv}
}

@misc{guo2025deepseekr1,
  author = {Guo, Daya and Yang, Dejian and Zhang, Haowei and Song, Junxiao and Wang, Peiyi and Zhu, Qihao},
  title = {DeepSeek-R1 Incentivizes Reasoning in LLMs through Reinforcement Learning},
  year = {2025},
  journal = {Nature}
}

@misc{he2025deepmath103k,
  author = {He, Zhiwei and Liang, Tian and Xu, Jiahao and Liu, Qiuzhi and Chen, Xingyu and Wang, Yue and Song, Linfeng},
  title = {DeepMath-103K: A Large-Scale, Challenging, Decontaminated, and Verifiable Mathematical Dataset for Advancing Reasoning},
  year = {2025},
  eprint = {2504.11456},
  archivePrefix = {arXiv}
}

@misc{he2026justrl,
  author = {He, Bingxiang and Qu, Zekai and Liu, Zeyuan and Chen, Yinghao and Zuo, Yuxin and Qian, Cheng and Zhang, Kaiyan},
  title = {JustRL: Scaling a 1.5B LLM with a Simple RL Recipe},
  year = {2026},
  howpublished = {ICLR Blogposts}
}

@misc{hendrycks2021math,
  author = {Hendrycks, Dan and Burns, Collin and Kadavath, Saurav and Arora, Akul and Basart, Steven and Tang, Eric and Song, Dawn and Steinhardt, Jacob},
  title = {Measuring Mathematical Problem Solving with the MATH Dataset},
  year = {2021},
  howpublished = {NeurIPS Datasets and Benchmarks}
}

@misc{jain2025livecodebench,
  author = {Jain, Naman and Han, King and Gu, Alex and Li, Wen-Ding and Yan, Fanjia and Zhang, Tianjun and Wang, Sida and Solar-Lezama, Armando and Sen, Koushik and Stoica, Ion},
  title = {LiveCodeBench: Holistic and Contamination Free Evaluation of Large Language Models for Code},
  year = {2025},
  howpublished = {International Conference on Learning Representations}
}

@misc{jia2026asymmetric,
  author = {Jia, Nan and Yang, Haojin and Ma, Xing and Lian, Jiesong and Zhang, Shuailiang and Zhang, Weipeng and Zeng, Ke and Cai, Xunliang and Sun, Zequn},
  title = {Asymmetric On-Policy Distillation: Bridging Exploitation and Imitation at the Token Level},
  year = {2026},
  eprint = {2605.06387},
  archivePrefix = {arXiv}
}

@misc{jin2026entropy,
  author = {Jin, Woogyeol and Min, Taywon and Yang, Yongjin and Kadhe, Swanand Ravindra and Zhou, Yi and Wei, Dennis and Baracaldo, Nathalie and Lee, Kimin},
  title = {Entropy-Aware On-Policy Distillation of Language Models},
  year = {2026},
  eprint = {2603.07079},
  archivePrefix = {arXiv}
}

@misc{ko2026scaling,
  author = {Ko, Jongwoo and Abdali, Sara and Kim, Young Jin and Chen, Tianyi and Cameron, Pashmina},
  title = {Scaling Reasoning Efficiently via Relaxed On-Policy Distillation},
  year = {2026},
  eprint = {2603.11137},
  archivePrefix = {arXiv}
}

@misc{kwon2023efficient,
  author = {Kwon, Woosuk and Li, Zhuohan and Zhuang, Siyuan and Sheng, Ying and Zheng, Lianmin and Yu, Cody Hao and Gonzalez, Joseph E. and Zhang, Hao and Stoica, Ion},
  title = {Efficient Memory Management for Large Language Model Serving with PagedAttention},
  year = {2023},
  howpublished = {ACM SIGOPS Symposium on Operating Systems Principles}
}

@misc{li2024numinamath,
  author = {Li, Jia and Beeching, Edward and Tunstall, Lewis and Lipkin, Ben and Soletskyi, Roman and Huang, Shengyi and Rasul, Kashif and Yu, Longhui and Jiang, Albert Q. and Shen, Ziju},
  title = {NuminaMath: The Largest Public Dataset in AI4Maths with 860K Pairs of Competition Math Problems and Solutions},
  year = {2024},
  howpublished = {Hugging Face dataset}
}

@misc{li2026rethinking,
  author = {Li, Yaxuan and Zuo, Yuxin and He, Bingxiang and Zhang, Jinqian and Xiao, Chaojun and Qian, Cheng and Yu, Tianyu and Gao, Huanang and Yang, Wenkai and Liu, Zhiyuan and Ding, Ning},
  title = {Rethinking On-Policy Distillation of Large Language Models: Phenomenology, Mechanism, and Recipe},
  year = {2026},
  eprint = {2604.13016},
  archivePrefix = {arXiv}
}

@misc{liu2023evalplus,
  author = {Liu, Jiawei and Xia, Chunqiu Steven and Wang, Yuyao and Zhang, Lingming},
  title = {Is Your Code Generated by ChatGPT Really Correct? Rigorous Evaluation of Large Language Models for Code Generation},
  year = {2023},
  howpublished = {NeurIPS}
}

@misc{liu2026stapo,
  author = {Liu, Shiqi and He, Zeyu and Zhan, Guojian and Tao, Letian and Zheng, Zhilong and Wu, Jiang and Wang, Yinuo and Guan, Yang and Sheng, Kehua and Zhang, Bo and Duan, Jingliang and Li, Shengbo Eben},
  title = {STAPO: Stabilizing Reinforcement Learning for LLMs by Silencing Rare Spurious Tokens},
  year = {2026},
  eprint = {2602.15620},
  archivePrefix = {arXiv}
}

@misc{lu2025onpolicy,
  author = {Lu, Kevin and Thinking Machines Lab},
  title = {On-Policy Distillation},
  year = {2025},
  howpublished = {Thinking Machines Lab blog}
}

@misc{oh2026kl,
  author = {Oh, Minjae and Song, Sangjun and Choi, Gyubin and Choi, Yunho and Jo, Yohan},
  title = {KL for a KL: On-Policy Distillation with Control Variate Baseline},
  year = {2026},
  eprint = {2605.07865},
  archivePrefix = {arXiv}
}

@misc{opencompass2025aime,
  author = {{OpenCompass}},
  title = {AIME2025 Dataset},
  year = {2025},
  howpublished = {Hugging Face dataset}
}

@misc{ren2026radar,
  author = {Ren, et al.},
  title = {RADAR: A Robust Optimizer for Reasoning Model Training},
  year = {2026}
}

@misc{shao2024deepseekmath,
  author = {Shao, Zhihong and Wang, Peiyi and Zhu, Qihao and Xu, Runxin},
  title = {DeepSeekMath: Pushing the Limits of Mathematical Reasoning in Open Language Models},
  year = {2024},
  eprint = {2402.03300},
  archivePrefix = {arXiv}
}

@misc{sheng2025hybridflow,
  author = {Sheng, Guangming and Zhang, Chi and Ye, Zilingfeng and Wu, Xibin and Zhang, Wang and Zhang, Ru and Peng, Yanghua and Lin, Haibin and Wu, Chuan},
  title = {HybridFlow: A Flexible and Efficient RLHF Framework},
  year = {2025},
  howpublished = {EuroSys}
}

@misc{team2026kimi,
  author = {{Kimi Team}},
  title = {Kimi K3: Open Frontier Intelligence},
  year = {2026},
  eprint = {2607.24653},
  archivePrefix = {arXiv}
}

@misc{xu2026deepseekv4,
  author = {{DeepSeek-AI} and Xu, Anyi and Lin, Bangcai and Xue, Bing and Wang, Bingxuan},
  title = {DeepSeek-V4: Towards Highly Efficient Million-Token Context Intelligence},
  year = {2026},
  eprint = {2606.19348},
  archivePrefix = {arXiv}
}

@misc{yang2025qwen3,
  author = {Yang, An and Li, Anfeng and Yang, Baosong and Zhang, Beichen},
  title = {Qwen3 Technical Report},
  year = {2025},
  eprint = {2505.09388},
  archivePrefix = {arXiv}
}

@misc{yang2026learning,
  author = {Yang, Wenkai and Liu, Weijie and Xie, Ruobing and Yang, Kai and Yang, Saiyong and Lin, Yankai},
  title = {Learning Beyond Teacher: Generalized On-Policy Distillation with Reward Extrapolation},
  year = {2026},
  eprint = {2602.12125},
  archivePrefix = {arXiv}
}

@misc{yang2026nemotron,
  author = {Yang, Zhuolin and Liu, Zihan and Chen, Yang and Dai, Wenliang and Wang, Boxin and Lin, Sheng-Chieh and Lee, Chankyu},
  title = {Nemotron-Cascade 2: Post-Training LLMs with Cascade RL and Multi-Domain On-Policy Distillation},
  year = {2026},
  eprint = {2603.19220},
  archivePrefix = {arXiv}
}

@misc{yu2025dapo,
  author = {Yu, Qiying and Zhang, Zheng and Zhu, Ruofei and Yuan, Yufeng},
  title = {DAPO: An Open-Source LLM Reinforcement Learning System at Scale},
  year = {2025},
  howpublished = {NeurIPS}
}

@misc{yu2026taming,
  author = {Yu, et al.},
  title = {Taming Unreliable Reasoning Trajectories in On-Policy Distillation},
  year = {2026}
}

@misc{zhan2026bicriteria,
  author = {Zhan, et al.},
  title = {Bi-Criteria Optimization for Reward-Guided On-Policy Distillation},
  year = {2026}
}

@misc{zhang2026full,
  author = {Zhang, Yaocheng and Chai, Jiajun and Fu, Yuqian and Tu, Songjun and Wang, Xiaohan and Lin, Wei and Yin, Guojun and Zhang, Qichao and Zhu, Yuanheng and Zhao, Dongbin},
  title = {Are Full Rollouts Necessary for On-Policy Distillation?},
  year = {2026},
  eprint = {2605.31490},
  archivePrefix = {arXiv}
}

@misc{zheng2026scope,
  author = {Zheng, Binbin and Ma, Xing and Liang, Yiheng and Ruan, Jingqing and Fu, Xiaoliang and Lin, Kepeng and Zhu, Benchang and Zeng, Ke and Cai, Xunliang},
  title = {SCOPE: Signal-Calibrated On-Policy Distillation Enhancement with Dual-Path Adaptive Weighting},
  year = {2026},
  eprint = {2604.10688},
  archivePrefix = {arXiv}
}

@misc{zhou2026less,
  author = {Zhou, Ziheng and Li, Jiaqi and Tang, Huacong and Wu, Ying Nian and Terzopoulos, Demetri},
  title = {Less Is More: Early Stopping Rollout for On-Policy Distillation},
  year = {2026},
  eprint = {2605.27028},
  archivePrefix = {arXiv}
}

@misc{zou2026rewardweighted,
  author = {Zou, Qingyun and Li, Yingze and Liu, Tianen and He, Bingsheng and Wong, Weng-Fai},
  title = {Reward-Weighted On-Policy Distillation with an Open Property-Equivalence Verifier for NL-to-SVA Generation},
  year = {2026},
  eprint = {2605.13501},
  archivePrefix = {arXiv}
}
\bibliographystyle{iclr2027_conference}

\clearpage

\section*{Appendix}
\appendix
\section{Proof of Proposition~\ref{prop:token-level-opd-gradient}}
\label{app:token-level-opd-gradient-proof}

Starting from the token-level OPD objective in~\eqref{eq:local-opd-objective}, we have
\begin{equation}
   J^{\mathrm{token}}_{\mathrm{OPD}}(\theta)
   =
   \mathbb{E}_{\bm{x} \sim \mathcal{D}}
   \left[
      \sum_{t=1}^{T}
      \mathbb{E}_{\bm{h}_t \sim d_{\bar{\theta}}(\cdot \mid \bm{x})}
      \left[
         \mathbb{D}_{\mathrm{KL}}
         \left(
         \pi_\theta(\cdot \mid \bm{h}_t)
         \,\middle\|\,
         \pi^*(\cdot \mid \bm{h}_t)
         \right)
         \right]
      \right].
   \label{eq:app-local-opd-objective}
\end{equation}
Since the prefix distribution \(d_{\bar{\theta}}(\bm{h}_t \mid \bm{x})\) is treated with stop-gradient, it is fixed when differentiating with respect to \(\theta\). Therefore, the gradient can be moved inside the expectation over prefixes:
\begin{equation}
   \begin{aligned}
      \nabla_\theta J^{\mathrm{token}}_{\mathrm{OPD}}(\theta)
      =
      \mathbb{E}_{\bm{x} \sim \mathcal{D}}
      \left[
         \sum_{t=1}^{T}
         \mathbb{E}_{\bm{h}_t \sim d_{\bar{\theta}}(\cdot \mid \bm{x})}
         \left[
            \nabla_\theta
            \sum_{a \in \mathcal{V}}
            \pi_\theta(a \mid \bm{h}_t)
            \left(
            \log \pi_\theta(a \mid \bm{h}_t)
            -
            \log \pi^*(a \mid \bm{h}_t)
            \right)
            \right]
         \right].
   \end{aligned}
   \label{eq:app-local-gradient-expanded}
\end{equation}
For a fixed prefix \(\bm{h}_t\), define
\begin{equation}
   \Delta(a;\bm{h}_t)
   \triangleq
   \log \pi_\theta(a \mid \bm{h}_t)
   -
   \log \pi^*(a \mid \bm{h}_t).
   \label{eq:app-token-log-ratio-action}
\end{equation}
This is the full-vocabulary counterpart of the sampled token-level log-ratio \(\Delta_t\) in~\eqref{eq:opd-token-log-ratio}. Then the gradient of the inner next-token KL term is
\begin{equation}
   \begin{aligned}
       &
      \nabla_\theta
      \sum_{a \in \mathcal{V}}
      \pi_\theta(a \mid \bm{h}_t)
      \Delta(a;\bm{h}_t)
      \\
       & =
      \sum_{a \in \mathcal{V}}
      \nabla_\theta \pi_\theta(a \mid \bm{h}_t)
      \Delta(a;\bm{h}_t)
      +
      \sum_{a \in \mathcal{V}}
      \pi_\theta(a \mid \bm{h}_t)
      \nabla_\theta \log \pi_\theta(a \mid \bm{h}_t).
   \end{aligned}
   \label{eq:app-local-kl-gradient-decompose}
\end{equation}
The second term in~\eqref{eq:app-local-kl-gradient-decompose} vanishes because
\begin{equation}
   \sum_{a \in \mathcal{V}}
   \pi_\theta(a \mid \bm{h}_t)
   \nabla_\theta \log \pi_\theta(a \mid \bm{h}_t)
   =
   \sum_{a \in \mathcal{V}}
   \nabla_\theta \pi_\theta(a \mid \bm{h}_t)
   =
   \nabla_\theta
   \sum_{a \in \mathcal{V}}
   \pi_\theta(a \mid \bm{h}_t)
   =
   0.
   \label{eq:app-local-score-zero}
\end{equation}
Using the score-function identity
\begin{equation}
   \nabla_\theta \pi_\theta(a \mid \bm{h}_t)
   =
   \pi_\theta(a \mid \bm{h}_t)
   \nabla_\theta \log \pi_\theta(a \mid \bm{h}_t),
   \label{eq:app-local-score-identity}
\end{equation}
we obtain
\begin{equation}
   \begin{aligned}
      \nabla_\theta
      \mathbb{D}_{\mathrm{KL}}
      \left(
      \pi_\theta(\cdot \mid \bm{h}_t)
      \,\middle\|\,
      \pi^*(\cdot \mid \bm{h}_t)
      \right)
       & =
      \sum_{a \in \mathcal{V}}
      \pi_\theta(a \mid \bm{h}_t)
      \Delta(a;\bm{h}_t)
      \nabla_\theta \log \pi_\theta(a \mid \bm{h}_t)
      \\
       & =
      \mathbb{E}_{y_t \sim \pi_\theta(\cdot \mid \bm{h}_t)}
      \left[
         \Delta_t
         \nabla_\theta \log \pi_\theta(y_t \mid \bm{h}_t)
         \right],
   \end{aligned}
   \label{eq:app-local-kl-gradient-score}
\end{equation}
where \(\Delta_t=\Delta(y_t;\bm{h}_t)\) follows from~\eqref{eq:opd-token-log-ratio}.

Substituting~\eqref{eq:app-local-kl-gradient-score} into~\eqref{eq:app-local-gradient-expanded} gives
\begin{equation}
   \begin{aligned}
      \nabla_\theta J^{\mathrm{token}}_{\mathrm{OPD}}(\theta)
      =
      \mathbb{E}_{\bm{x} \sim \mathcal{D}}
      \left[
         \sum_{t=1}^{T}
         \mathbb{E}_{\bm{h}_t \sim d_{\bar{\theta}}(\cdot \mid \bm{x}),\,
               y_t \sim \pi_\theta(\cdot \mid \bm{h}_t)}
         \left[
            \Delta_t
            \nabla_\theta \log \pi_\theta(y_t \mid \bm{h}_t)
            \right]
         \right].
   \end{aligned}
   \label{eq:app-local-gradient-prefix-token}
\end{equation}
When the stop-gradient prefix distribution is evaluated at the current policy, the prefix \(\bm{h}_t\) together with the next token \(y_t\) can be generated by an on-policy trajectory
\(\bm{y}\sim\pi_\theta(\cdot\mid\bm{x})\), while gradients are not propagated through the prefix-sampling distribution. Hence, \eqref{eq:app-local-gradient-prefix-token} can be written as
\begin{equation}
   \nabla_\theta J^{\mathrm{token}}_{\mathrm{OPD}}(\theta)
   =
   \mathbb{E}_{\bm{x} \sim \mathcal{D},\,
         \bm{y} \sim \pi_\theta(\cdot \mid \bm{x})}
   \left[
      \sum_{t=1}^{T}
      \Delta_t
      \nabla_\theta
      \log \pi_\theta(y_t \mid \bm{h}_t)
      \right],
   \label{eq:app-local-gradient-final}
\end{equation}
which is exactly the token-level OPD gradient in~\eqref{eq:token-level-opd-gradient}. This proves Proposition~\ref{prop:token-level-opd-gradient}.
\section{Proof of Proposition~\ref{prop:sequence-level-opd-gradient}}
\label{app:sequence-level-opd-gradient-proof}

Starting from the sequence-level OPD objective in~\eqref{eq:seq-opd-objective}, we expand the reverse KL over complete response trajectories:
\begin{equation}
   J^{\mathrm{seq}}_{\mathrm{OPD}}(\theta)
   =
   \mathbb{E}_{\bm{x} \sim \mathcal{D}}
   \left[
      \sum_{\bm{y} \in \mathcal{V}^{T}}
      \pi_\theta(\bm{y} \mid \bm{x})
      \Delta(\bm{y})
      \right],
   \label{eq:app-seq-opd-expanded}
\end{equation}
where \(\Delta(\bm{y})\) is the sequence-level log-ratio defined in~\eqref{eq:opd-sequence-log-ratio}. Taking the gradient with respect to \(\theta\) gives
\begin{equation}
   \begin{aligned}
      \nabla_\theta J^{\mathrm{seq}}_{\mathrm{OPD}}(\theta)
      =
      \mathbb{E}_{\bm{x} \sim \mathcal{D}}
      \left[
         \sum_{\bm{y} \in \mathcal{V}^{T}}
         \nabla_\theta \pi_\theta(\bm{y} \mid \bm{x})
         \Delta(\bm{y})
         +
         \sum_{\bm{y} \in \mathcal{V}^{T}}
         \pi_\theta(\bm{y} \mid \bm{x})
         \nabla_\theta
         \log \pi_\theta(\bm{y} \mid \bm{x})
         \right].
   \end{aligned}
   \label{eq:app-seq-opd-gradient-expanded}
\end{equation}
The second term in~\eqref{eq:app-seq-opd-gradient-expanded} vanishes because
\begin{equation}
   \sum_{\bm{y} \in \mathcal{V}^{T}}
   \pi_\theta(\bm{y} \mid \bm{x})
   \nabla_\theta \log \pi_\theta(\bm{y} \mid \bm{x})
   =
   \nabla_\theta
   \sum_{\bm{y} \in \mathcal{V}^{T}}
   \pi_\theta(\bm{y} \mid \bm{x})
   =
   \nabla_\theta 1
   =
   0.
   \label{eq:app-score-zero-seq}
\end{equation}
Using the score-function identity,
\begin{equation}
   \nabla_\theta \pi_\theta(\bm{y} \mid \bm{x})
   =
   \pi_\theta(\bm{y} \mid \bm{x})
   \nabla_\theta \log \pi_\theta(\bm{y} \mid \bm{x}),
   \label{eq:app-score-identity-seq}
\end{equation}
we obtain
\begin{equation}
   \nabla_\theta J^{\mathrm{seq}}_{\mathrm{OPD}}(\theta)
   =
   \mathbb{E}_{\bm{x} \sim \mathcal{D},\,
         \bm{y} \sim \pi_\theta(\cdot \mid \bm{x})}
   \left[
      \Delta(\bm{y})
      \nabla_\theta
      \log \pi_\theta(\bm{y} \mid \bm{x})
      \right].
   \label{eq:app-seq-score-form}
\end{equation}
By the autoregressive factorization and the log-ratio decomposition in~\eqref{eq:opd-log-ratio}, we have
\begin{equation}
   \Delta(\bm{y})
   =
   \sum_{t'=1}^{T}
   \Delta_{t'},
   \qquad
   \nabla_\theta
   \log \pi_\theta(\bm{y} \mid \bm{x})
   =
   \sum_{t=1}^{T}
   \nabla_\theta
   \log \pi_\theta(y_t \mid \bm{h}_t).
   \label{eq:app-ar-decomposition-seq}
\end{equation}
Substituting~\eqref{eq:app-ar-decomposition-seq} into~\eqref{eq:app-seq-score-form} yields
\begin{equation}
   \begin{aligned}
      \nabla_\theta J^{\mathrm{seq}}_{\mathrm{OPD}}(\theta)
      =
      \mathbb{E}_{\bm{x} \sim \mathcal{D},\,
            \bm{y} \sim \pi_\theta(\cdot \mid \bm{x})}
      \left[
         \sum_{t=1}^{T}
         \sum_{t'=1}^{T}
         \Delta_{t'}
         \nabla_\theta
         \log \pi_\theta(y_t \mid \bm{h}_t)
         \right].
   \end{aligned}
   \label{eq:app-double-sum-seq}
\end{equation}

It remains to remove the terms with \(t'<t\). For \(t'<t\), \(\Delta_{t'}\) is determined by earlier tokens and is therefore measurable with respect to \(\bm{h}_t\). Hence, conditioning on \(\bm{h}_t\),
\begin{equation}
   \begin{aligned}
       &
      \mathbb{E}_{y_t \sim \pi_\theta(\cdot \mid \bm{h}_t)}
      \left[
         \Delta_{t'}
         \nabla_\theta
         \log \pi_\theta(y_t \mid \bm{h}_t)
         \right]
      \\
       & \quad =
      \Delta_{t'}
      \mathbb{E}_{y_t \sim \pi_\theta(\cdot \mid \bm{h}_t)}
      \left[
         \nabla_\theta
         \log \pi_\theta(y_t \mid \bm{h}_t)
         \right]
      =
      0,
   \end{aligned}
   \label{eq:app-past-term-vanishes}
\end{equation}
where the last equality follows from the token-level score identity
\begin{equation}
   \mathbb{E}_{y_t \sim \pi_\theta(\cdot \mid \bm{h}_t)}
   \left[
      \nabla_\theta
      \log \pi_\theta(y_t \mid \bm{h}_t)
      \right]
   =
   \nabla_\theta
   \sum_{y_t \in \mathcal{V}}
   \pi_\theta(y_t \mid \bm{h}_t)
   =
   0.
   \label{eq:app-token-score-zero}
\end{equation}
Therefore, all terms with \(t'<t\) in~\eqref{eq:app-double-sum-seq} vanish in expectation. Keeping only the terms with \(t'\geq t\), we obtain
\begin{equation}
   \nabla_\theta J^{\mathrm{seq}}_{\mathrm{OPD}}(\theta)
   =
   \mathbb{E}_{\bm{x} \sim \mathcal{D},\,
         \bm{y} \sim \pi_\theta(\cdot \mid \bm{x})}
   \left[
      \sum_{t=1}^{T}
      \left(
      \sum_{t'=t}^{T}
      \Delta_{t'}
      \right)
      \nabla_\theta
      \log \pi_\theta(y_t \mid \bm{h}_t)
      \right],
   \label{eq:app-seq-gradient-final}
\end{equation}
which is exactly the sequence-level gradient in~\eqref{eq:sequence-level-opd-gradient}. This proves Proposition~\ref{prop:sequence-level-opd-gradient}.

\section{Proof of Theorem~\ref{thm:tc-opd-stability}}
\label{app:tc-opd-stability-proof}

\begin{proof}
   Let \(N_t=T-t+1\) and define
   \[
      \widetilde{A}_\tau^{(0)}
      =
      A_\tau^{(0)}
      -
      \mathbb{E}\!\left[A_\tau^{(0)}\right].
   \]
   Since
   \[
      \left\|\widetilde{A}_\tau^{(0)}\right\|_2^2
      =
      \operatorname{Var}\!\left(A_\tau^{(0)}\right)
      \leq
      \mathbb{E}\!\left[\left(A_\tau^{(0)}\right)^2\right]
      \leq
      \sigma_\Delta^2,
   \]
   we have
   \(\left\|\widetilde{A}_\tau^{(0)}\right\|_2\leq\sigma_\Delta\).

   For the sequence-level OPD credit, the triangle inequality in
   \(L_2\) gives
   \begin{align}
      \sqrt{\operatorname{Var}\!\left(A_t^{(1)}\right)}
       & =
      \left\|
      \sum_{\tau=t}^{T}
      \widetilde{A}_\tau^{(0)}
      \right\|_2
      \nonumber \\
       & \leq
      \sum_{\tau=t}^{T}
      \left\|\widetilde{A}_\tau^{(0)}\right\|_2
      \leq
      N_t\sigma_\Delta.
   \end{align}
   Therefore,
   \begin{equation}
      \operatorname{Var}\!\left(A_t^{(1)}\right)
      \leq
      N_t^2\sigma_\Delta^2
      =
      (T-t+1)^2\sigma_\Delta^2.
      \label{eq:appendix-seq-opd-variance-bound}
   \end{equation}

   This quadratic dependence is attainable in the worst case. In
   particular, if \(A_\tau^{(0)}=Z\) for all
   \(\tau\in\{t,\ldots,T\}\), where
   \(\mathbb{E}[Z]=0\) and
   \(\operatorname{Var}(Z)=\sigma_\Delta^2\), then
   \[
      \operatorname{Var}\!\left(A_t^{(1)}\right)
      =
      \operatorname{Var}(N_tZ)
      =
      N_t^2\sigma_\Delta^2.
   \]

   For the \(\gamma\mathrm{OPD}\) credit, similarly,
   \begin{align}
      \sqrt{\operatorname{Var}\!\left(A_t^{(\gamma)}\right)}
       & =
      \left\|
      \sum_{\tau=t}^{T}
      \gamma^{\tau-t}
      \widetilde{A}_\tau^{(0)}
      \right\|_2
      \nonumber \\
       & \leq
      \sigma_\Delta
      \sum_{\tau=t}^{T}
      \gamma^{\tau-t}
      \nonumber \\
       & =
      \sigma_\Delta
      \frac{1-\gamma^{N_t}}{1-\gamma}
      \leq
      \frac{\sigma_\Delta}{1-\gamma},
      \qquad
      \forall\gamma\in[0,1).
   \end{align}
   Squaring both sides yields
   \begin{equation}
      \operatorname{Var}\!\left(A_t^{(\gamma)}\right)
      \leq
      \frac{\sigma_\Delta^2}{(1-\gamma)^2},
      \qquad
      \forall\gamma\in[0,1).
      \label{eq:appendix-gamma-opd-variance-bound}
   \end{equation}

   Thus, sequence-level OPD can exhibit variance that grows
   quadratically with the remaining horizon, whereas
   \(\gamma\mathrm{OPD}\) admits a horizon-independent variance bound
   for every fixed \(\gamma<1\).
\end{proof}

\section{Algorithm}
\label{app:algorithm}

The complete reward-enhanced $\gamma\mathrm{OPD}$ procedure is summarized in Algorithm~\ref{alg:gamma-opd}.
\floatstyle{ruled}
\restylefloat{algorithm}
\begin{algorithm}[H]
   \caption{Reward-Enhanced Temporal-Credit On-Policy Distillation
      (\texorpdfstring{$\gamma$}{gamma}OPD)}
   \label{alg:gamma-opd}
   \begin{algorithmic}[1]
      \Require Dataset $\mathcal{D}$, student policy $\pi_\theta$,
      teacher policy $\pi^*$, discount factor $\gamma$, batch size $B$

      \For{each training iteration}
      \State Sample prompts $\{\bm{x}_i\}_{i=1}^{B}\sim\mathcal{D}$
      and responses $\{\bm{y}_i\}_{i=1}^{B}
         \sim \pi_\theta(\cdot\mid\bm{x}_i)$
      \State Evaluate the verifier to obtain
      $\{R_i\}_{i=1}^{B}$, where
      $R_i \leftarrow R(\bm{x}_i,\bm{y}_i)$

      \For{each mini-batch}
      \For{each response $(\bm{x},\bm{y},R)$ in the mini-batch}
      \For{$t=1,\ldots,T$}
      \State $\Delta_t \leftarrow
         \log\pi_\theta(y_t\mid\bm{h}_t)
         -\log\pi^*(y_t\mid\bm{h}_t)$
      \EndFor
      \State Compute $\{\hat{A}_t^{(\gamma)}\}_{t=1}^{T}$
      according to~\eqref{eq:gamma-opd-advantage} and ~\eqref{eq:reward-compatible-bounded-tcopd}
      \EndFor
      \State Update $\theta$ using~\eqref{eq:reward-enhanced-tcopd-gradient}
      \EndFor
      \EndFor

      \State \textbf{return} final student policy $\pi_\theta$
   \end{algorithmic}
\end{algorithm}

\section{Experiment Details}
\label{app:experiment-details}

All methods, including $\gamma\mathrm{OPD}$, are implemented with \texttt{veRL}~\citep{sheng2025hybridflow}, using vLLM~\citep{kwon2023efficient} for rollouts and FSDP for actor training.
For mathematical reasoning, we train on DeepMath-103K~\citep{he2025deepmath103k}, retaining examples with difficulty level at least 6. Each prompt is formatted in chat style and appended with ``Please output the final answer within \textbackslash boxed\{\}.''
For code reasoning, we train on Eurus-RL-Code~\citep{cui2026process}.
For both domains, verifier outcomes are mapped to binary rewards in $\{+1,-1\}$. Math responses are evaluated with a DAPO-style boxed-answer verifier~\citep{yu2025dapo}, while code responses are verified through execution-based unit tests.
In the multi-teacher setting, each sample is routed to its domain-specific teacher and verifier while sharing the same student policy.

OPD uses reverse-KL token advantages without an additional KL reward penalty. Unless otherwise specified, all models use RADAR~\citep{ren2026radar} optimizer, a constant learning rate on 32 NVIDIA H20 GPUs, with each training run taking approximately 1--3 days; full hyperparameters are provided in Table~\ref{tab:full_hyperparameters}.

\begin{table}[!t]
   \centering
   \caption{Training hyperparameters for the OPD-based methods. The RLVR baselines use a rollout group size of 8.}
   \label{tab:full_hyperparameters}
      \begin{tabular}{lc}
      \toprule
      \textbf{Hyperparameter}          & \textbf{Value}           \\
      \midrule
      Training framework               & \texttt{veRL}            \\
      Rollout backend                  & vLLM                     \\
      Math reward function             & DAPO boxed verifier      \\
      Code reward function             & Execution-based verifier \\
      Train batch size                 & 1024                     \\
      Responses per prompt             & 1                        \\
      PPO mini batch size              & 1024                     \\
      PPO micro batch size/GPU         & 1                        \\
      Max prompt length                & 2048                     \\
      Max response length              & 16384                    \\
      Optimizer                        & RADAR                    \\
      Weight decay                     & 0.01                     \\
      Learning rate                    & $1\times 10^{-5}$        \\
      Training temperature / top-$p$   & 1.0 / 1.0                \\
      Validation temperature / top-$p$ & 1.0 / 1.0                \\
      Rollout tensor parallel size     & 4                        \\
      \bottomrule
   \end{tabular}
   \vspace{-0.45em}
\end{table}

We use temperature 1.0 and top-$p$ 1.0 for all evaluations.
For mathematical reasoning, we evaluate on AIME24~\citep{li2024numinamath}, AIME25~\citep{opencompass2025aime}, AMC23~\citep{li2024numinamath}, and MATH500~\citep{hendrycks2021math}, with a maximum prompt length of 2048 and response length of 16384.
Following the repeated-sampling convention, AIME24, AIME25, and AMC23 are evaluated with 32 samples per problem, while MATH500, MinervaMath, and OlympiadBench use 4 samples per problem; we report both accuracy and pass rate.
All predictions are scored using the same boxed-answer verifier as in training.
For code reasoning, we evaluate HumanEval+ and MBPP+~\citep{liu2023evalplus} with EvalPlus, and the v6 split of LiveCodeBench~\citep{jain2025livecodebench}.
HumanEval+ and MBPP+ use one sample per problem, whereas LiveCodeBench uses four samples with a maximum generation length of 16384 tokens.

\section{Supplementary Experimental Results}
\label{app:supplementary-experimental-results}

\subsection{Computational Time and Memory Analysis}
\label{app:internal-overhead-gamma-opd}

We profile the additional computation introduced by $\gamma\mathrm{OPD}$ using the same 4B student/teacher models and 32-GPU setup as the main experiments. Statistics are averaged over training steps 2--100 after warm-up. The extra operations include discounted temporal-credit computation, bounded advantage shaping, and verifier-reward mixing.
Overall, $\gamma\mathrm{OPD}$ adds only $0.1064\%$ computation time and $0.00046\%$ memory relative to a full OPD update, indicating negligible training overhead.

\begin{table}[!t]
   \centering
   \caption{Computational and memory overhead of the additional
      $\gamma\mathrm{OPD}$ operations.}
   \label{tab:gammaopd_internal_overhead}
   \begin{adjustbox}{max width=\textwidth,center}
      \begin{tabular}{lrr}
         \toprule
         \textbf{Operation}                 &
         \textbf{Time (ms/step)}            &
         \textbf{Memory (MB)}                                \\
         \midrule
         Discounted temporal credit         & 609.85 & 0.063 \\
         Bounded advantage shaping          & 0.42   & 0.125 \\
         Verifier-reward mixing             & 0.04   & 0.188 \\
         \midrule
         \textbf{Total additional overhead} &
         \textbf{610.32}                    &
         \textbf{0.375}                                      \\
         \midrule
         Vanilla OPD update (reference)     &
         573{,}669.00                       &
         81{,}254.40                                         \\
         \midrule
         \textbf{Relative overhead}         &
         \textbf{0.1064\%}                  &
         \textbf{0.00046\%}                                  \\
         \bottomrule
      \end{tabular}
   \end{adjustbox}
   \vspace{-0.45em}
\end{table}

\subsection{Complete Token-Level Advantage Visualization}
\label{app:complete-token-level-advantages}

This appendix provides the complete token-level visualizations behind
Figure~\ref{fig:token-level-advantages}. Each response is shown as a
sequence of consecutive full-width panels rather than as a collection of
small subfigures, so that individual tokens and advantage magnitudes remain
readable. The upper row in every panel shows the local advantage
$A_t^{(0)}$, while the lower row shows the normalized temporally propagated
advantage $A_t^{(0.99)}$.

As shown in Figure~\ref{fig:appendix-correct-token-level-advantages}, for the correct response, $A_t^{(0)}$ gives a sparse local signal, whereas $A_t^{(0.99)}$ propagates information from later reasoning steps and assigns stronger positive credit to the key mathematical derivation. As shown in Figure~\ref{fig:appendix-incorrect-token-level-advantages}, for the incorrect response, $A_t^{(0)}$ penalizes tokens largely unrelated to the actual error, while $A_t^{(0.99)}$ assigns stronger negative credit to the erroneous formula derivation without excessively penalizing the final end-of-sequence token.

{\centering

   \includegraphics[
      page=1,
      width=0.98\textwidth,
      height=0.43\textheight,
      keepaspectratio
   ]{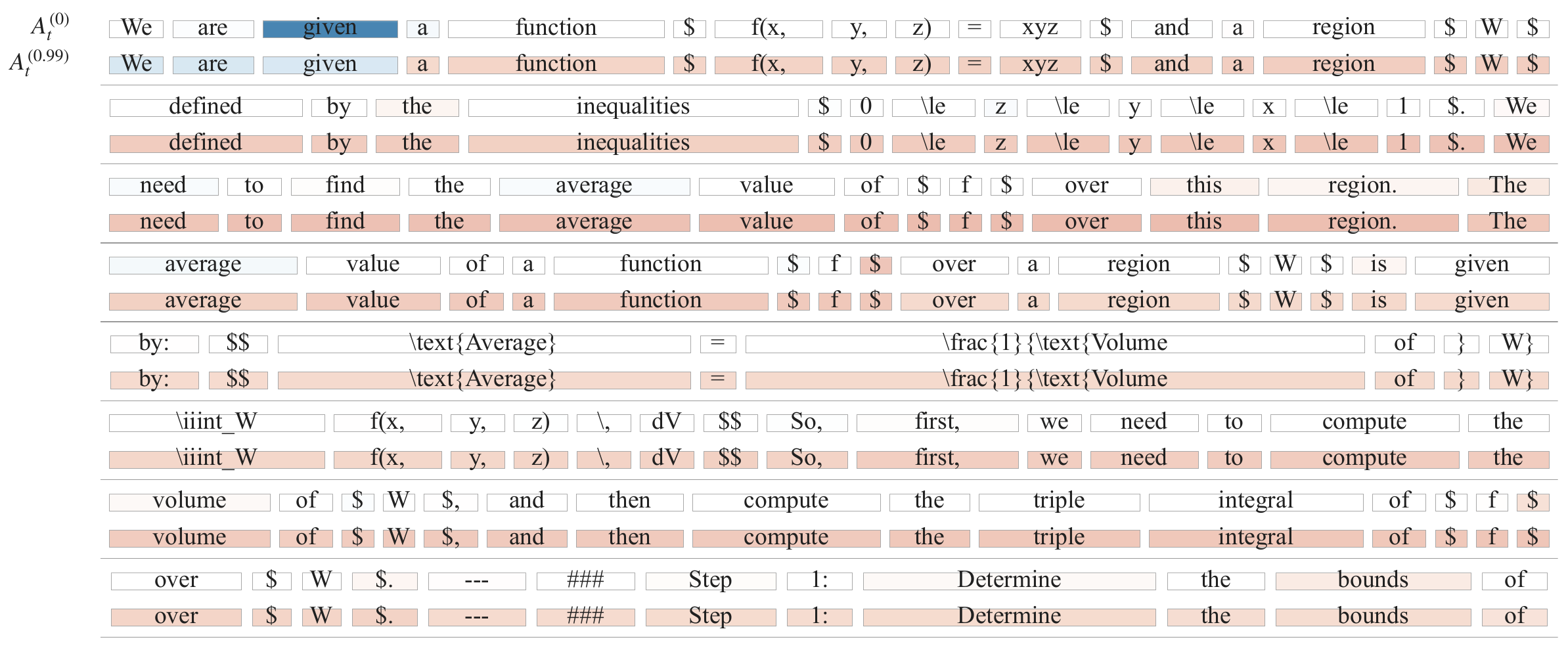}
   \par\vspace{0.2em}

   \includegraphics[
      page=2,
      width=0.98\textwidth,
      height=0.43\textheight,
      keepaspectratio
   ]{figures/ordered-simplex-two-row-complete.pdf}
   \par\vspace{0.2em}

   \includegraphics[
      page=3,
      width=0.98\textwidth,
      height=0.43\textheight,
      keepaspectratio
   ]{figures/ordered-simplex-two-row-complete.pdf}
   \par\vspace{0.2em}

   \includegraphics[
      page=4,
      width=0.98\textwidth,
      height=0.43\textheight,
      keepaspectratio
   ]{figures/ordered-simplex-two-row-complete.pdf}
   \par\vspace{0.2em}

   \includegraphics[
      page=5,
      width=0.98\textwidth,
      height=0.43\textheight,
      keepaspectratio
   ]{figures/ordered-simplex-two-row-complete.pdf}
   \par\vspace{0.2em}

   \includegraphics[
      page=6,
      width=0.98\textwidth,
      height=0.43\textheight,
      keepaspectratio
   ]{figures/ordered-simplex-two-row-complete.pdf}
   \par\vspace{0.2em}

   \includegraphics[
      page=7,
      width=0.98\textwidth,
      height=0.43\textheight,
      keepaspectratio
   ]{figures/ordered-simplex-two-row-complete.pdf}

   \par\vspace{0.8em}

   \refstepcounter{figure}
   \label{fig:appendix-correct-token-level-advantages}

   \parbox{0.96\textwidth}{
      \small
      \textbf{Figure~\thefigure. Complete token-level advantage visualization
         for the correct response.}
      The response is shown in token order across seven consecutive panels,
      with $A_t^{(0)}$ in the upper row and $A_t^{(0.99)}$ in the
      lower row. Blue and red indicate negative and positive advantages,
      respectively.
   }

   \par}

\begin{center}
   \includegraphics[
      page=1,
      width=0.98\textwidth,
      height=0.39\textheight,
      keepaspectratio
   ]{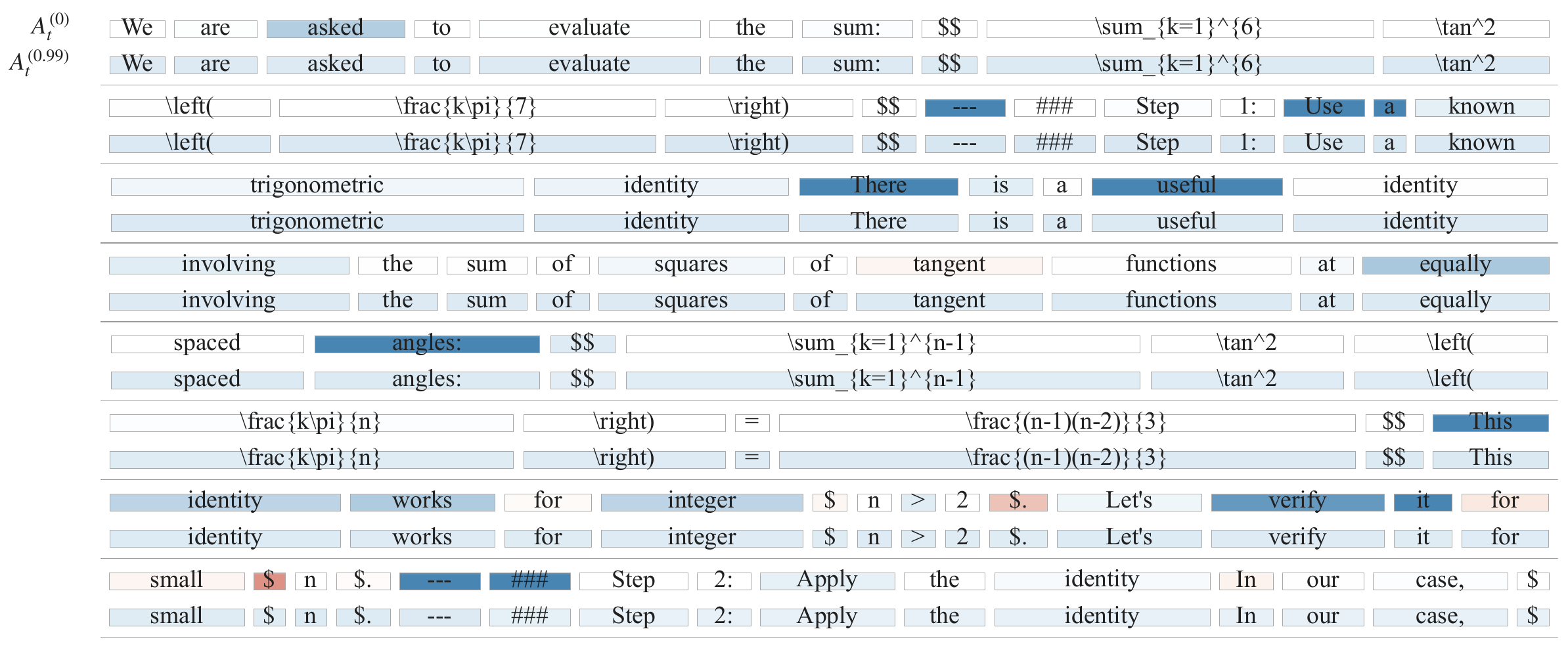}

   \vspace{0.5em}

   \includegraphics[
      page=2,
      width=0.98\textwidth,
      height=0.34\textheight,
      keepaspectratio
   ]{figures/tangent-sum-two-row-complete.pdf}

   \vspace{0.2em}

   \refstepcounter{figure}
   \label{fig:appendix-incorrect-token-level-advantages}
   \parbox{0.96\textwidth}{
      \small
      \textbf{Figure~\thefigure. Complete token-level advantage visualization
         for the incorrect response.}
      The two consecutive panels show how $A_t^{(0)}$ can penalize tokens
      unrelated to the actual error, whereas $A_t^{(0.99)}$ assigns stronger
      negative credit to the erroneous formula derivation.
   }
\end{center}

\end{document}